\definecolor{newcolor}{rgb}{.8,.349,.1}
\DeclareMathOperator{\argmin}{argmin}
\newtheorem{proposition}{Proposition}
\newcommand{\TT}{\mathcal{T}}
\newcommand{\SSS}{\mathcal{S}}
\newcommand{\CC}{\mathcal{C}}
\newcommand{\PP}{\mathcal{\mathbf{P}}}
\begin{document}

 \thispagestyle{empty}

\ifpreprint
  \setcounter{page}{1}
\else
  \setcounter{page}{1}
\fi

\begin{frontmatter}

\title{Robust superpixels using color and contour features along linear path}

\author[1,2,3,4]{R{\'e}mi {Giraud}\corref{cor1}} 
\author[1,2,5]{Vinh-Thong {Ta}}
\author[3,4]{Nicolas {Papadakis}}

\address[1]{Univ. Bordeaux, LaBRI, UMR 5800, PICTURA, F-33400 Talence, France.}
\address[2]{CNRS, 		 LaBRI, UMR 5800, PICTURA, F-33400 Talence, France.}
\address[3]{Univ. Bordeaux, 	IMB, UMR 5251, F-33400 Talence, France.}
\address[4]{CNRS,  		IMB, UMR 5251, F-33400 Talence, France.}
\address[5]{Bordeaux INP, LaBRI, UMR 5800, PICTURA, F-33400 Talence, France.}


\begin{abstract}

Superpixel decomposition methods are widely used in computer vision and image processing applications.
By grouping homogeneous pixels, the accuracy can be increased and the decrease of the number 
of elements to process can drastically reduce the computational burden.
For most superpixel methods, a trade-off is computed between 1) color homogeneity, 
2) adherence to the image contours and 3) shape regularity of the decomposition.
In this paper, we propose a framework that 
jointly enforces all these aspects and
provides accurate and regular Superpixels with Contour Adherence using Linear Path (SCALP).
During the decomposition, we propose to consider color features
along the linear path between 
the pixel and the corresponding superpixel barycenter.
A contour prior is also used to prevent the crossing of image boundaries
when associating a pixel to a superpixel. 
Finally, in order to improve the decomposition accuracy and the robustness to noise,   
we propose to integrate the pixel neighborhood information,  
while preserving the same computational complexity.
SCALP is extensively evaluated on standard segmentation dataset,
and the obtained results 
outperform the ones of the 
state-of-the-art methods.
SCALP is also extended for supervoxel decomposition on MRI images.
\end{abstract}

\begin{keyword}
Superpixels,  Linear Path, Segmentation, Contour Detection 
\end{keyword}

 \end{frontmatter}

 \newpage

\section{Introduction}

The use of superpixels has become a very popular technique 
for many computer vision and image processing applications such as:
  object localization \citep{fulkerson2009},
   contour detection \citep{arbelaez2011}, 
   face labeling \citep{kae2013},
data associations across views \citep{sawhney2014}, 
 or multi-class object segmentation 
\citep{giraud2017spm,gould2008,gould2014,tighe2010,yang2010}. 
Superpixel decomposition methods group pixels into homogeneous regions, 
providing a low-level representation that tries to respect the image contours.
For image segmentation, 
where the goal is to split the image into similar regions according to object, color or texture priors, 
the decomposition into superpixels may improve the segmentation accuracy and decrease the computational burden. \citep{gould2014}.
Contrary to multi-resolution approaches, that decrease the image size, superpixels
preserve the image geometry, since their boundaries follow the image contours.
Hence, the results obtained at the superpixel level may be closer to the ground truth result at the pixel level.

Many superpixel methods have been proposed using various techniques. 
Although the definition of an optimal decomposition depends on the tackled application,
most methods tend to achieve the following properties.
First, the boundaries of the decomposition should adhere to the image contours, and
superpixels should not overlap with multiple objects.
Second, the superpixel clustering must  
group pixels 
with homogeneous colors.
Third, the superpixels should have compact shapes and consistent sizes.
The shape regularity helps to visually analyze the image decomposition and
has been proven to impact application performances \citep{reso2013,veksler2010,strassburg2015influence}.
Finally, since superpixels are usually used as a pre-processing step,
the decomposition should be obtained in limited computational time and 
allow the control of the number of produced elements.

To achieve the aforementioned properties, 
most state-of-the-art methods compute a trade-off 
between color homogeneity and shape regularity of the superpixels.
Nevertheless, some approaches less consider the regularity property and
can produce superpixels of highly irregular shapes and sizes.
{\color{black}
In the following, we present an overview of the most popular superpixel methods, defined as either  
irregular or regular ones.
Note that although some methods can include terms into their models to generate for instance more regular results,
\emph{e.g.}, \cite{vandenbergh2012},
we here consider methods in their default settings, as described by the authors.

The regularity criteria can be seen as the behavior to frequently 
produce irregular regions, in terms of both shapes and sizes \citep{giraud_jei_2017}. 
Methods such as \citet{felzenszwalb2004,vedaldi2008} generate very irregular regions in terms of both 
size and shape while SLIC can generate a few irregular shapes
but their sizes are constrained into a fixed size window.
}

\subsection*{Irregular Superpixel Methods}

With irregular methods, superpixels can have very different sizes and stretched shapes. 
For instance, small superpixels can be produced, without 
enough pixels to compute a significant descriptor.
Too large superpixels may also overlap with several objects contained in the image.
First segmentation methods, such as the watershed approach, \emph{e.g.}, \citet{vincent91},
compute decompositions of highly irregular size and shape.
Methods such as Mean shift \citep{comaniciu2002} or Quick shift \citep{vedaldi2008}  
consider an initial decomposition and perform a histogram-based segmentation.
However, they are very sensitive to parameters and are obtained with high computational cost \citep{vedaldi2008}. 
Another approach considers pixels as nodes of a graph to perform a faster agglomerative clustering \citep{felzenszwalb2004}.
These methods present an important drawback:
they do not allow to directly control the number of superpixels.
This is particularly an issue when superpixels are used as a low-level representation
to reduce the computational time.

{\color{black}
The SEEDS method \citep{vandenbergh2012} 
proposes a coarse-to-fine approach starting from a regular grid.
However, this method may provide superpixels with irregular shapes.
Although  a compactness constraint can be set to compute regular superpixels,
the authors report degraded results of decomposition accuracy with such approach.
}

\subsection*{Regular Superpixel Methods}

For superpixel-based object recognition methods, \emph{e.g.}, \citet{gould2008,gould2014}, 
or video tracking, \emph{e.g.}, \citet{reso2013,wang2011},
the use of regular decompositions is mandatory, \emph{i.e.}, 
decompositions with superpixels having approximately the same size and compact shapes.
For instance, for superpixel-based video tracking applications,
the tracking of object trajectories within a scene 
is improved with consistent decompositions over time \citep{chang2013,reso2013}.

Most of the regular methods consider an initial regular grid, allowing to set the number of superpixels,
and update superpixels boundaries 
while applying spatial constraints.
Classical methods are based on region growing, such as Turbopixels \citep{levinshtein2009} using geometric flows,
or eikonal-based methods, \emph{e.g.}, ERGC \citep{buyssens2014}, while 
other approaches use graph-based energy models \citep{liu2011,veksler2010}.
In  \citet{machairas2015}, a 
watershed algorithm is adapted to produce regular decompositions
using a spatially regularized image gradient. 
Similarly to SEEDS \citep{vandenbergh2012}, a coarse-to-fine approach has recently been proposed in \citet{yao2015},
producing highly regular superpixels.

The SLIC method \citep{achanta2012}
performs an iterative accurate clustering, while providing regular superpixels,
in order of magnitude faster than graph-based approaches \citep{liu2011,veksler2010}.
The SLIC method has been extended in several recent works, \emph{e.g.}, \citet{chen2017,huang2016,rubio2016,zhang2016,zhang2017ssgd}.
However, it can fail to adhere to image contours, as for other regular methods, \emph{e.g.}, \citet{levinshtein2009,yao2015}, 
since it is based on simple local color features and 
globally enforces the decomposition regularity using a 
fixed trade-off between color and
spatial distances.

\subsection*{Contour Constraint}

In the literature, several works have attempted to improve the decomposition performances in terms of contour adherence
by using gradient or contour prior information.
In \citet{mori2004}, a contour detection algorithm is  
used to compute a pre-segmentation
using the normalized cuts algorithm \citep{shi2000}. 
The segmentation  
may accurately guide the superpixel decomposition, but such
approaches based on normalized cuts are  
computationally expensive \citep{mori2004}.
Moreover, the contour adherence of the produced decompositions are far from state-of-the-art results \citep{achanta2012}.  
In \citet{moore2008}, the superpixel decomposition is constrained to fit to a grid, 
also called superpixel lattice. 
The decomposition is then refined using graph cuts.
However, this method is very dependent on the used contour prior.
Moreover, although 
the superpixels have approximately the same sizes, they have quite irregular shapes and 
may appear visually unsatisfactory.

In \citet{machairas2015}, 
the image gradient information is used to constrain the superpixel boundaries, 
but the results on superpixel evaluation metrics are lower than the ones of SLIC \citep{achanta2012}.
In \citet{zhang2016}, the local gradient information is considered
to improve the superpixel boundaries evolution.
However, the computational cost of the method is increased by a $10{\times}$ order of magnitude compared to SLIC.

\subsection*{Segmentation from Contour Detection}

Contour detection methods generally do not enforce the contour closure.
To produce an image segmentation,
a contour completion step is hence necessary.
Many contour completion methods have been proposed 
(see for instance \citet{arbelaez2011} and references therein).
This step may improve the accuracy of the contour detection, 
since objects are generally segmented by closed curves.

Methods such as \citet{arbelaez2008,arbelaez2009}, 
propose a hierarchical image segmentation based on contour detection.
This can be considered as a probability contour map,  
that produces a set of closed curves for
any threshold. 
Although such methods enable to segment an image from a contour map,
they do not allow to control the size, the shape and the number of the produced regions,
while most superpixel methods enable to set the number of superpixels which approximately 
have the same size.
Moreover, the performances of the contour detection
is extremely dependent on the fixed threshold parameter,
which depends on the image content \citep{arbelaez2009}.
Hence, they are mainly considered as segmentation methods and 
cannot be considered as relevant frameworks to compute superpixel decompositions.

\subsection*{Robustness to Noise}

Superpixel decompositions are usually used as a pre-processing step in many computer vision applications.
Therefore, they tend to be applied to heterogeneous images that can suffer from noise.
{\color{black}Moreover, image textures and high local gradients may also mislead the superpixel decomposition.
Most of the state-of-the-art superpixel methods are not robust to noise, and provide 
degraded decompositions when applied to slightly noised images or images with low resolution.}
With such approaches, a denoising step is necessary to compute a relevant decomposition.
For instance, the watershed approach of \citet{machairas2015} 
uses a pre-filtering step to smooth local gradients according to the
given size of superpixels.
Nevertheless, this step is only designed to smooth local gradients of initial images and
the impact of this filtering is not reported \citep{machairas2015}.

\newcommand{\wo}{0.1175\textwidth}
\newcommand{\ho}{0.34\textwidth}
\begin{figure*}[t!]
\centering
{\scriptsize
\begin{tabular}{@{\hspace{0mm}}c@{\hspace{1mm}}c@{\hspace{1mm}}c@{\hspace{1mm}}c@{\hspace{1mm}}c@{\hspace{1mm}}c@{\hspace{1mm}}c@{\hspace{1mm}}c@{\hspace{0mm}}}
\includegraphics[width=\wo]{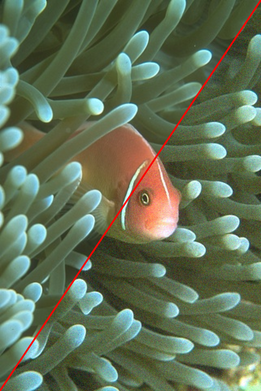}&
\includegraphics[width=\wo]{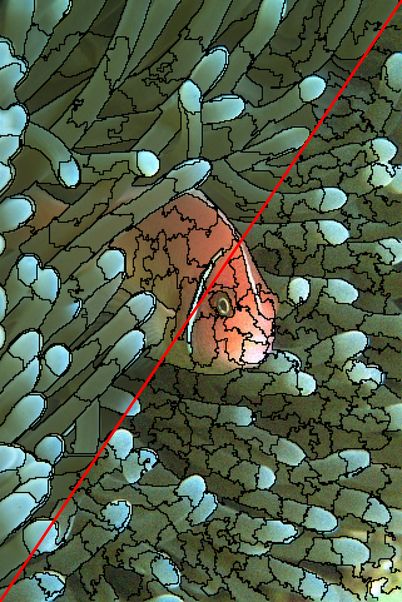}&
\includegraphics[width=\wo]{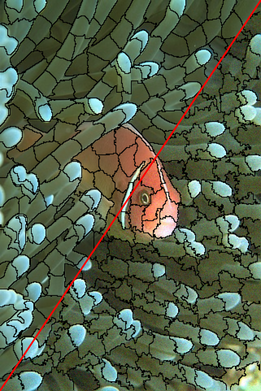}&
\includegraphics[width=\wo]{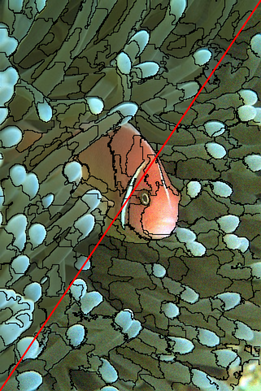}&
\includegraphics[width=\wo]{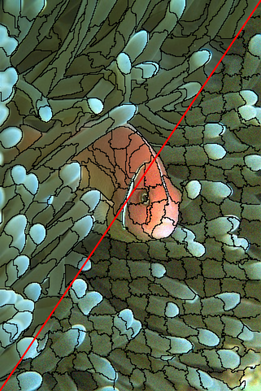}&
\includegraphics[width=\wo]{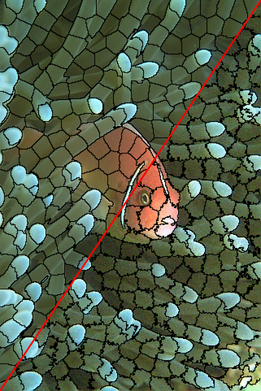}&
\includegraphics[width=\wo]{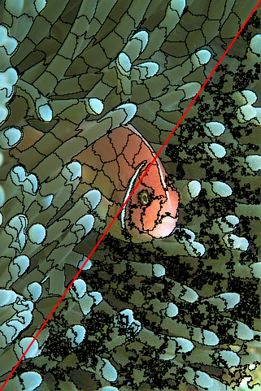}&
\includegraphics[width=\wo]{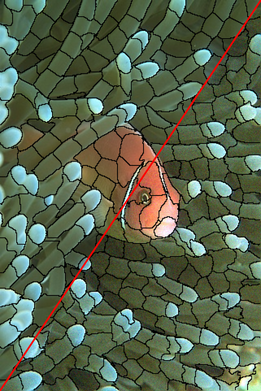}\\
Initial/noisy image & ERS & SLIC & SEEDS & ERGC &ETPS &LSC & SCALP\\
\end{tabular}
}
\caption{Comparison of the proposed SCALP approach to the following state-of-the-art superpixel methods:
ERS \citep{liu2011},
SLIC \citep{achanta2012},
SEEDS \citep{vandenbergh2012},
ERGC \citep{buyssens2014},
ETPS \citep{yao2015} and
LSC \citep{chen2017}.
SCALP obtains the most visually satisfying result with superpixels that adhere well to the image contours.
A Gaussian noise has been added to the bottom-right part of the image to demonstrate that SCALP is robust to noise,
contrary to most of the compared methods.
} 
\label{fig:reg} 
\end{figure*}

\subsection*{Contributions}

In this paper, we propose a method that produces accurate, regular and robust 
Superpixels with Contour Adherence using Linear Path (SCALP)\footnote{An implementation of the proposed SCALP method is available at: 
\url{www.labri.fr/~rgiraud/research/scalp.php}}.
Our decomposition approach aims to jointly improve all superpixel properties:
color homogeneity, respect of image objects and shape regularity.
In Figure \ref{fig:reg}, we compare the proposed approach to state-of-the-art methods on an example result.
SCALP provides a more satisfying result that respects the image contours.
Moreover, contrary to most state-of-the-art methods, 
SCALP is robust to noise, 
since it provides accurate and regular decompositions 
on the noisy part of the image.
\begin{itemize}
 \item Most state-of-the-art methods have very degraded performances when applied to even slightly noised images (see Figure \ref{fig:reg}).
We propose to consider the neighboring pixels information during the decomposition process.
We show that these features can be integrated at the same computational complexity, while they 
improve the decomposition accuracy and the robustness to noise.
\item
To further enforce the color homogeneity within a regular shape,
we define the linear path between the pixel and the superpixel barycenter,
and we consider color features along the path. 
Contrary to geodesic distances that can allow irregular paths 
leading to non convex shapes, our linear path naturally enforces the decomposition regularity.
A contour prior can also be used 
to enforce the respect of image objects and prevent the crossing of image contours when associating a pixel to a superpixel.
\item We propose a framework to generate superpixels within an initial segmentation 
computed from a contour prior completion.
The produced superpixels are regular in terms of size and shape although they are constrained by the segmentation 
to obtain higher contour adherence performances.
\item We provide an extensive evaluation of SCALP on the Berkeley segmentation dataset (BSD).
Our results outperform recent state-of-the-art methods, on initial and noisy images, 
in terms of superpixel and contour detection metrics. 
\item Finally, we naturally extend SCALP to supervoxel decomposition and provide results
on magnetic resonance imaging (MRI) segmentation.
\end{itemize}

This paper is an extension of the work proposed in \citet{giraud2016}, 
with substantial new improvements such as 
the use in constant time of the neighboring pixels information,
the use of contour prior by considering the maximum intensity on the linear path,
or the extension to supervoxels.
We show that these new contributions improve the decomposition performances,
and by performing the clustering in a high dimensional feature space \citep{chen2017},
SCALP substantially outperforms \citet{giraud2016} and the recent state-of-the-art methods.

\section{SCALP Framework\label{sec:scalp}}

The SCALP framework is based on
 the simple linear iterative clustering framework (SLIC) \citep{achanta2012},
 and is summarized in Figure \ref{fig:scap}.
 In this section, we first present SLIC and then propose several improvements: 
 a robust distance on pixel neighborhood, the use of features along the linear path to the superpixel barycenter and
 a framework considering an initial segmentation as constraint
while producing regular superpixels.

\begin{figure}[t!]
\centering
\includegraphics[width=0.47\textwidth]{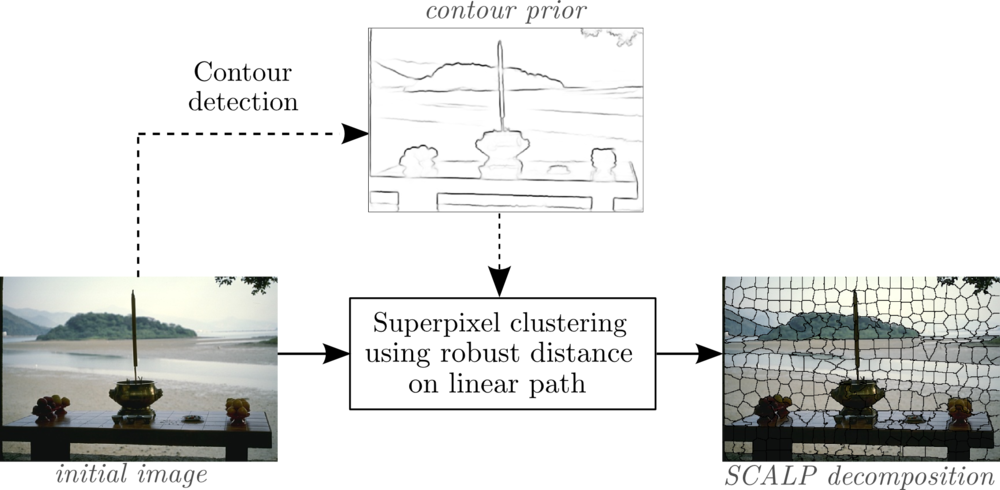}
\caption{The SCALP framework. 
A prior can be used (dotted arrows) to enforce the respect of
image contours, leading to an accurate decomposition.
When trying to associate a pixel to a superpixel, 
SCALP considers the color information from neighboring pixels, and color and contour features
on the linear path to the superpixel barycenter.
} 
\label{fig:scap}
\end{figure}

\subsection{Iterative Clustering Framework}

The iterative clustering framework introduced in \citet{achanta2012} 
proposes a fast iterative framework using simple color features (average in CIELab colorspace).
The decomposition is initialized by a regular grid with blocks of size $r{\times}r$.
This size is computed by the ratio between the number of pixels $N$ and
the number of desired superpixels $K$, such that $r=\sqrt{N/K}$. 
A color clustering is then iteratively performed into fixed windows of size $(2r+1){\times}(2r+1)$ pixels 
centered on the superpixel barycenter.
The superpixel is thus constrained into this window,  
which limits its size.
Each superpixel $S_k$ is described by  
a cluster $C_k$, that
contains the average CIELab color feature on pixels $p\in S_k$, $F_k=[l_k,a_k,b_k]$, 
and $X_k=[x_k,y_k]$, the spatial barycenter of $S_k$ such that $C_k=[F_k,X_k]$.
The iterative clustering consists, for each cluster $C_k$,  in
testing  all pixels $p=[F_p,X_p]$ within a $(2r+1){\times}(2r+1)$ pixels window centered on $X_k$, by computing 
a spatial distance $d_s$, and a color distance $d_c$:
\begin{equation}
d_s(p,C_k) = {(x_p-x_k)^2 + (y_p - y_k)^2},  \label{dist_slic}
\end{equation}
\begin{equation}
d_c(p,C_k) = {(l_p-l_k)^2 + (a_p-a_k)^2 + (b_p-b_k)^2},   \label{color_slic}
\end{equation}
\begin{equation}
D(p,C_k)=d_c(p,C_k) + d_s(p,C_k)\frac{m^2}{r^2},  \label{distance_slic} 
\end{equation}
with $m$ the regularity parameter that sets the trade-off between 
 spatial and color distances. 
High values of $m$ produce more regular superpixels,
while small values allow better adherence to image boundaries,
producing superpixels of more variable sizes and shapes.
The pixel $p$ is associated to the superpixel $S_k$ minimizing \eqref{distance_slic}.

Nevertheless, 
since a parameter $m$ is set to enforce the regularity in \eqref{distance_slic}, 
SLIC can fail to both produce regular superpixels and to adhere to the image contours.
In the following, we show how the decomposition accuracy can be improved with a more robust distance, 
by considering neighboring color features and information of pixels along the linear path to the superpixel barycenter.

\subsection{Robust Distance on Pixel Neighborhood}

Natural images may present high local image gradients or noise, 
that can highly degrade the decomposition into superpixels.
In this section, we propose to consider the pixel neighborhood
to improve both accuracy and robustness,  
and we give a method to integrate this information in the decomposition process
at a constant complexity.

\subsubsection{Distance on Neighborhood}

We propose to integrate the neighboring pixels information in our framework
when computing the clustering distance between a pixel $p$ and a cluster $C_k$.
Similarly to patch-based approaches, 
the pixels in a square area $\mathcal{P}(p)$ centered on $p$, of size $|\mathcal{P}(p)|=(2n+1)\times(2n+1)$ pixels,
 are considered in the proposed color distance $D_c$:
\begin{align}
  D_c(p,C_k) &= \sum_{q\in \mathcal{P}(p)}{(F_q-F_{C_k})^2  w_{p,q}}   . \label{patch}
 \end{align}
To be robust to high local gradients while preserving the image contours, 
we define $w_{p,q}$ such that 
$w_{p,q} = \exp{\left(-{(F_p - F_q)^2}/{(2\sigma^2)}\right)}/Z$,
 with $Z$ the normalization factor such that
 $Z=\sum_{q\in \mathcal{P}(p)} \exp{\left(-{(F_p - F_q)^2}/{(2\sigma^2)}\right)}$, 
 and $\sum_{q\in \mathcal{P}(p)}w_{p,q} = 1$.

\subsubsection{Fast Distance Computation}
 
The complexity of the proposed distance \eqref{patch} is $\mathcal{O}(N)$, with
$N=(2n+1)^2=|\mathcal{P}(p)|$, the number of pixels in the neighborhood.
We propose a method that drastically reduces the computational burden of \eqref{patch}.
Since the distance is computed between a set of pixels and a cluster, 
it can be decomposed and partially pre-computed.
\begin{proposition} 
Eq. \eqref{patch} can be computed at complexity $\mathcal{O}(1)$.  
\end{proposition}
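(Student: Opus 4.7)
My plan is to exploit the fact that the cluster feature $F_{C_k}$ does not depend on the summation index $q$, so that the quadratic inside the sum can be expanded and separated into a part that depends only on the pixel $p$ (and its neighborhood) and a part that depends only on the cluster $C_k$. The pixel-dependent part can then be pre-computed once and stored, after which every distance evaluation during the clustering reduces to a constant number of arithmetic operations, independently of the neighborhood size $|\mathcal{P}(p)|$.

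Concretely, I would first expand
\begin{equation*}
(F_q - F_{C_k})^2 = F_q^2 - 2 F_q F_{C_k} + F_{C_k}^2,
\end{equation*}
and substitute into \eqref{patch}. Pulling $F_{C_k}$ out of the sum yields
\begin{equation*}
D_c(p,C_k) = \sum_{q\in\mathcal{P}(p)} w_{p,q} F_q^2 \;-\; 2 F_{C_k} \sum_{q\in\mathcal{P}(p)} w_{p,q} F_q \;+\; F_{C_k}^2 \sum_{q\in\mathcal{P}(p)} w_{p,q}.
\end{equation*}
By the normalization $\sum_{q\in\mathcal{P}(p)} w_{p,q} = 1$ imposed in the definition of $w_{p,q}$, the last sum is simply $1$. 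Defining the two pixel-only quantities
\begin{equation*}
A(p) = \sum_{q\in\mathcal{P}(p)} w_{p,q} F_q^2, \qquad B(p) = \sum_{q\in\mathcal{P}(p)} w_{p,q} F_q,
\end{equation*}
we obtain the compact expression $D_c(p,C_k) = A(p) - 2 F_{C_k} B(p) + F_{C_k}^2$, which involves a constant number of additions and multiplications.

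The final step is to observe that $A(p)$ and $B(p)$ depend only on the input image (through $F_q$ and the weights $w_{p,q}$, which themselves are functions of $F_p$ and $\{F_q\}_{q\in\mathcal{P}(p)}$), and in particular are independent of both the iteration index and the cluster $C_k$ being tested. They can therefore be computed once and stored at the beginning of the algorithm, so that each subsequent evaluation of $D_c(p,C_k)$ during the iterative clustering is performed in $\mathcal{O}(1)$, which proves the claim. The mild subtlety to flag is that $F_q$ is vector-valued (CIELab), so $F_q^2$ and $F_{C_k}^2$ should be understood componentwise and the argument applied per channel, and that the $\mathcal{O}(1)$ bound refers to per-evaluation cost after a one-time image-wide preprocessing pass; this preprocessing is itself linear in the number of pixels once the neighborhood size $n$ is treated as a constant parameter of the method.
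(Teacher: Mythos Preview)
Your proof is correct and follows essentially the same approach as the paper: expand the square, pull the cluster-dependent term $F_{C_k}$ outside the sum, use the normalization $\sum_{q}w_{p,q}=1$, and pre-compute the two pixel-only moments (your $A(p),B(p)$ are exactly the paper's $\mathcal{F}_p^{(2)},\mathcal{F}_p^{(1)}$). Your additional remarks on the vector-valued features and on the one-time preprocessing cost are accurate clarifications that the paper leaves implicit.
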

\begin{proof}
The distance between features $F$ in \eqref{patch} reads:  
 \begin{align}
 \sum_{q\in \mathcal{P}(p)} (&F_q-F_{C_k})^2 w_{p,q} \nonumber \\
 &= \hspace{-0.1cm} \sum_{q\in \mathcal{P}(p)}{\left(F_{q}^2 + F_{C_k}^2 - 2F_qF_{C_k}\right)} w_{p,q}, \nonumber \\
 &=  \hspace{-0.1cm} \sum_{q\in \mathcal{P}(p)}{F_{q}^2w_{p,q}} + \sum_{q\in \mathcal{P}(p)}{F_{C_k}^2w_{p,q}} - 2\sum_{q\in \mathcal{P}(p)}{F_q F_{C_k}w_{p,q}}, \nonumber \\
 &=   {\mathcal{F}_p}^{(2)} + F_{C_k}^2 \sum_{q\in \mathcal{P}(p)}{{w_{p,q}}} - 2F_{C_k} \sum_{q\in \mathcal{P}(p)}{F_qw_{p,q}} , \nonumber \\
  &=   {\mathcal{F}_p}^{(2)} + F_{C_k}^2 - 2F_{C_k}{\mathcal{F}_p}^{(1)} .  \label{speed_up}
  \end{align}
{\color{black}In Eq. \eqref{speed_up}, the terms ${\mathcal{F}_p}^{(2)} = \sum_{q\in \mathcal{P}(p)}{F_{q}^2w_{p,q}}$, 
 and ${\mathcal{F}_p}^{(1)} = \sum_{q\in \mathcal{P}(p)}{F_qw_{p,q}}$,
 which only depend on the initial image, can be pre-computed at the beginning of the algorithm.
The complexity of the proposed distance $D_c$ is hence reduced to $\mathcal{O}(1)$ instead of $\mathcal{O}(N)$.}
\end{proof}

\subsection{Color and Contour Features on Linear Path}

A superpixel decomposition is considered as satisfying according to the homogeneity of the
color clustering and the respect of image contours. 
To enforce these aspects,
we propose to consider color and contour features  
on the linear path between the pixel and the superpixel barycenter.
We define the linear path $\PP^k_p$, that contains the pixels starting from
$X_p$, the position of a pixel $p$, to $X_k$, the barycenter of a superpixel $S_k$.

\begin{figure}[t!]
\centering
\includegraphics[width=0.35\textwidth]{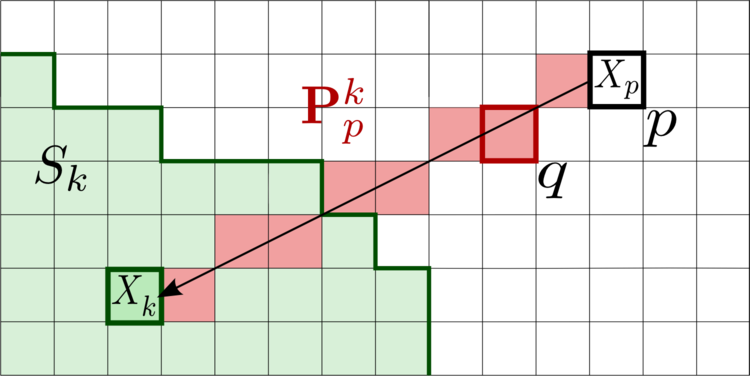} 
\caption{Illustration of the linear path $\PP_p^k$ between a pixel $p$ and
a superpixel $S_k$ of barycenter $X_k$.
}  
\label{fig:bres_diag}
\end{figure}

\subsubsection{Linear Path between Pixel and Superpixel Barycenter}

The considered linear path $\PP^k_p$ between 
a pixel $p$ and the barycenter of a superpixel $S_k$ is 
illustrated in Figure \ref{fig:bres_diag}.
The pixels $q\in \PP^k_p$ (red) are those that intersect with the segment (arrow) between $X_p$,
the position of pixel $p$ (black),
and $X_k$, the barycenter of the superpixel $S_k$ (green).
Pixels $q$ are selected such that each one only has $2$ neighbors belonging to the path 
within a $3{\times}3$ pixels neighborhood.

{\color{black}Other works consider a geodesic distance to enforce 
the color homogeneity \citep{rubio2016,wang2013} or the respect of object contours \citep{zhang2017ssgd}.}
The colors along the geodesic distance must be close to 
the average superpixel color to enable the association of the pixel to the superpixel,
leading to potential irregular shapes.
We illustrate this aspect in Figure \ref{fig:geodesic}.
We compare a geodesic distance and average color distance on the linear path.
While the geodesic can find a sinuous path to connect distant pixels, 
our linear path penalizes the crossing of regions with different colors.

{\color{black}A decomposition example for SCALP and a method based on a geodesic color distance \citep{rubio2016} is given in Figure \ref{fig:geodesic_comp}.
By considering the proposed linear path, 
we limit the computational cost,
that can be substantial for geodesic distances,
and we enforce the decomposition compactness,
since features are considered on the direct path to the superpixel barycenter.}
{\color{black}
More precisely, our linear path encourages the star-convexity property \citep{gulshan2010geodesic}, \emph{i.e.},
for a given shape, it exists a specific point,
in our case, the superpixel barycenter, 
from which each point of the shape can be reached by a linear path that does not escape from the shape.
}

{\color{black}Finally, note that despite the large number of pixel information considered during the decomposition process, 
the computational cost can be very limited.
In practice, at a given iteration, for a given superpixel, the distance between a pixel and the superpixel has only to be computed once.
{\color{black}
The color distance can indeed be stored for each pixel and directly used for another linear path containing this pixel.
Moreover, a very slight approximation can be made by directly storing for each pixel
}
the average distance on the linear path to the superpixel barycenter,
and using it when crossing an already processed pixel on a new linear path.}

\begin{figure}[t!]
\centering
{\footnotesize
\begin{tabular}{@{\hspace{0mm}}c@{\hspace{1mm}}c@{\hspace{1mm}}c@{\hspace{0mm}}}
\includegraphics[width=0.155\textwidth]{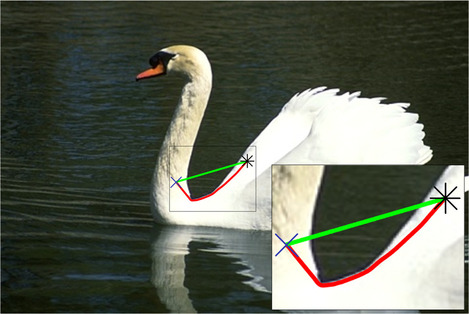}&
\includegraphics[width=0.155\textwidth]{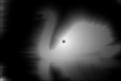}&
\includegraphics[width=0.155\textwidth]{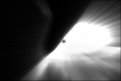}\\
(a) Image &(b) Geodesic distance & (c) Linear path distance\\
\end{tabular}
}
\caption{
Comparison of geodesic path (red) and linear path (green) between
an initial (star) and final (cross) pixel position in (a).
In (b) and (c), lighter colors indicate a lower distance from the initial point (star).
When trying to associate this point to a superpixel, 
the distance at the superpixel barycenter position is considered.
Contrary to the linear path, defined in the spatial space, 
the geodesic path, defined in the color space, may lead to irregular superpixel shapes.
}
\label{fig:geodesic}
\end{figure}

\begin{figure}[t!]
\centering
{\footnotesize
\begin{tabular}{@{\hspace{0mm}}c@{\hspace{2mm}}c@{\hspace{2mm}}c@{\hspace{0mm}}}
\includegraphics[width=0.155\textwidth]{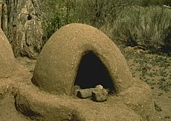}&
\includegraphics[width=0.155\textwidth]{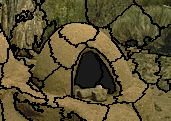}&
\includegraphics[width=0.155\textwidth]{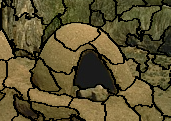}\\
(a) Image & (b) \citet{rubio2016}  & (c) SCALP  \\
 &  (geodesic) &   (linear path)
\end{tabular}
} 
\caption{
{\color{black}Comparison on an image (a) of decomposition approaches using a geodesic color distance \citep{rubio2016} (b),
and the proposed method SCALP, computing a color distance on the linear path to the superpixel barycenter (c).
SCALP generates regular shapes while the geodesic-based method can create irregular superpixels.}
}  
\label{fig:geodesic_comp}
\end{figure}

\subsubsection{Color Distance to Cluster}
The distance to minimize during the decomposition is composed of a color and a spatial term.
Nevertheless, the color distance
is now also computed on $\PP^k_p$, \emph{i.e}, between the cluster and the pixels 
on the linear path to the superpixel barycenter. 
We define the new color distance as: 
\begin{equation}
d_c(p,C_k,\PP^k_p )\hspace{-0.05cm}=\hspace{-0.05cm}\lambda D_c(p,C_k)\hspace{-0.1mm}+\hspace{-0.1mm}(1\hspace{-0.05cm}-\hspace{-0.05cm}\lambda)
\frac{1}{|\PP^k_p|}\hspace{-0.1cm}\sum_{q\in \PP^k_p}\hspace{-0.1cm}D_c(q,C_k), 
\label{newdist0}
\end{equation}
\noindent where 
$\lambda\in[0,1]$ weights the influence of the color distance along the path. 
With the proposed distance \eqref{newdist0}, 
colors on the path to the barycenter should be close to the superpixel average color.

The distance \eqref{newdist0} naturally enforces the regularity and
also prevents irregular shapes to appear.
Figure \ref{fig:banana} shows two examples of irregular shapes
that can be computed with SLIC \citep{achanta2012}, for instance
in areas of color gradation.
The barycenters $X_k$ of these irregular superpixels $S_k$ are not contained within the shapes.
The linear path $\PP^k_p$ hence capture pixels with colors that are far from the average one of $S_k$.
Therefore, \eqref{newdist0} penalizes the clustering of all pixels $p\in S_k$ 
to this superpixel during the current iteration,
so they are associated to neighboring superpixels.

\begin{figure}[h!]
\centering
{\footnotesize
\begin{tabular}{@{\hspace{0mm}}c@{\hspace{1mm}}c@{\hspace{1.5mm}}c@{\hspace{1mm}}c@{\hspace{0mm}}}
\includegraphics[height=0.075\textwidth,width=0.115\textwidth]{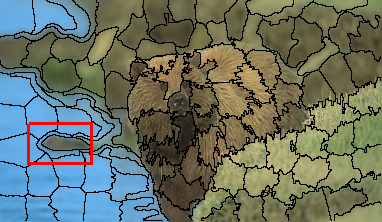}&
\includegraphics[height=0.075\textwidth,width=0.115\textwidth]{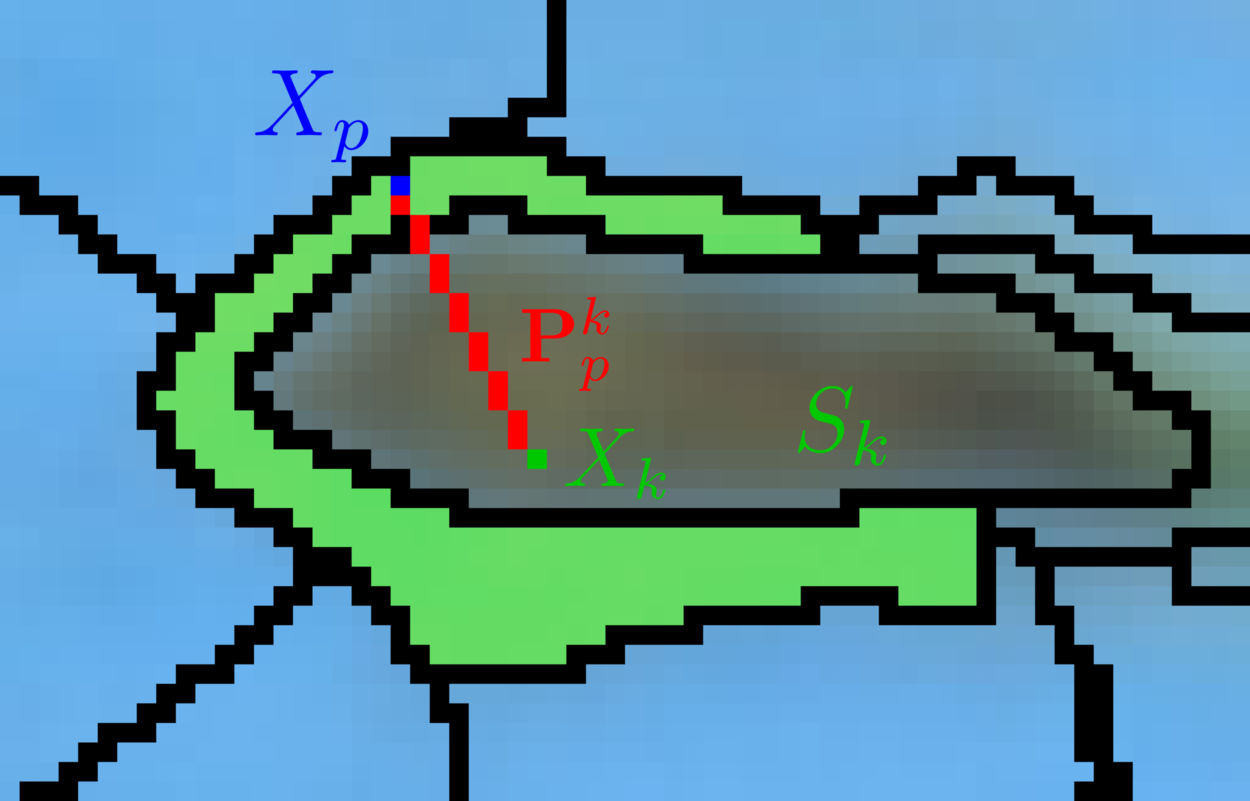}&
\includegraphics[height=0.075\textwidth,width=0.115\textwidth]{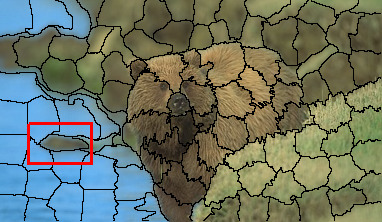}&
\includegraphics[height=0.075\textwidth,width=0.115\textwidth]{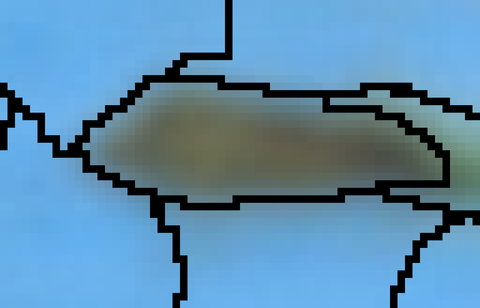}\\
\includegraphics[height=0.075\textwidth,width=0.115\textwidth]{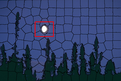}&
\includegraphics[height=0.075\textwidth,width=0.115\textwidth]{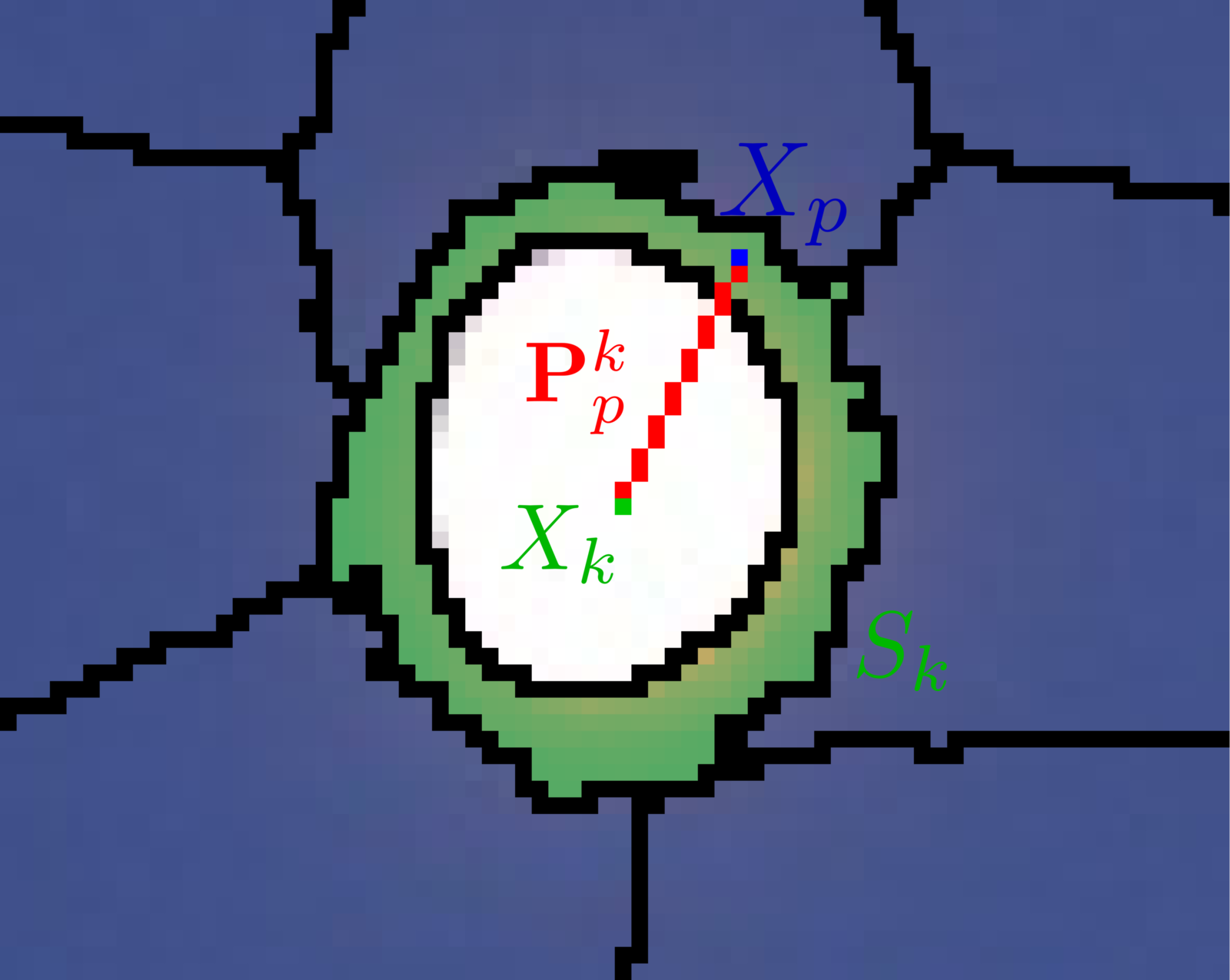}&
\includegraphics[height=0.075\textwidth,width=0.115\textwidth]{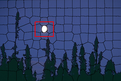}&
\includegraphics[height=0.075\textwidth,width=0.115\textwidth]{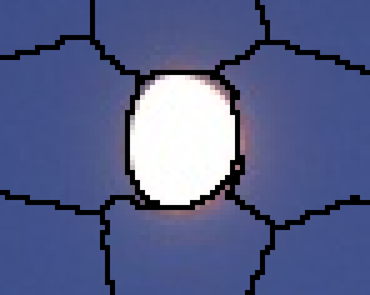}\\
\multicolumn{2}{c}{(a) SLIC irregular shapes}&\multicolumn{2}{c}{(b) SCALP regular shapes}\\
\end{tabular}
}
\caption{
Examples of irregular shapes obtained with SLIC \citep{achanta2012} (a)
and regular shapes obtained with SCALP using the color distance on the linear path \eqref{newdist0} (b).
With non regular shapes, the barycenter may fall outside the superpixel, 
and the linear path cross regions with different colors, penalizing the 
clustering distance.
} 
\label{fig:banana}
\end{figure}

\subsubsection{Adherence to Contour Prior}

Since the optimal color homogeneity may be not in line with the respect of image objects,
or fail to catch thin edges,
we propose to consider the information of
a contour prior map $\CC$ on the linear path.
Such map sets $\CC(p)$ to $1$ if a contour is detected 
at pixel $p$, and to $0$ otherwise. 
We propose a fast and efficient way to integrate a contour prior
by weighting the distance between a pixel and a superpixel cluster  
by $d_{\CC}(\PP^k_p)$, 
considering the maximum of contour intensity on $\PP^k_p$:
\begin{equation}
\label{newweight}d_{\CC}(\PP^k_p) = 1 + \gamma \hspace{0.1cm} \underset{q\in \PP^k _p}{\text{max}}\hspace{0.05cm}\CC(q) ,
\end{equation}
\noindent  with $\gamma\geq0$. 
Figure \ref{fig:max_contour} illustrates the selection of maximum
contour intensity on the linear path. 
When a high contour intensity is found on the path between a pixel $p$ and the barycenter of $S_k$, 
such term prevents this pixel 
to be associated to the superpixel, 
and all superpixel boundaries will follow more accurately the image contours.
The proposed framework can consider either soft contour maps, \emph{i.e.}, 
maps having values between $0$ and $1$, or binary maps.
It also adapts well to thick contour prior since only the maximum intensity on the path is considered. \\

Finally, we multiply this term to the color and spatial distances to ensure the respect of the images contours, and
the proposed distance $D$ to minimize during the
decomposition is defined as: 
\begin{equation}
   D(p,C_k)=\left(d_c(p,C_k,\PP^k_p) + d_s(p,C_k)\frac{m^2}{r^2}\right)d_{\CC}(\PP^k_p)  ,  \label{newdist} 
\end{equation}
\noindent 
with the spatial distance $d_s$ computed as Eq. \eqref{dist_slic}.
The SCALP method is summarized in Algorithm \ref{SCALP}. \\

\newcommand{\ttth}{0.14\textwidth}
\newcommand{\ttt}{0.175\textwidth}
\newcommand{\tttth}{0.17\textwidth}
\newcommand{\tttt}{0.15\textwidth}
 \begin{figure}[t!]
\centering
{\footnotesize
  \begin{tabular}{@{\hspace{1mm}}c@{\hspace{1mm}}c@{\hspace{1mm}}}
 \includegraphics[width=\ttt]{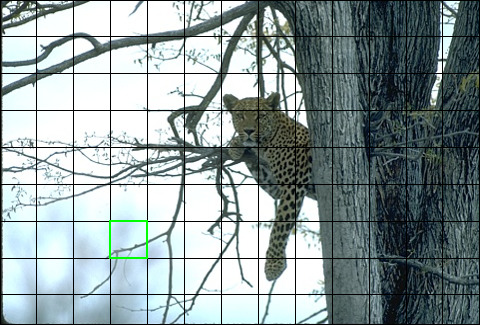}&
   \includegraphics[width=\ttt]{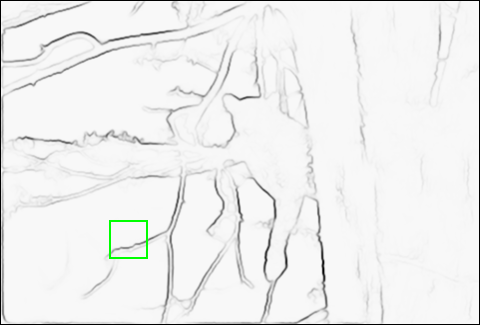}\\ \vspace{0.1cm}
   (a) Initial grid decomposition&(b) Contour prior\\ 
   \includegraphics[width=\tttt]{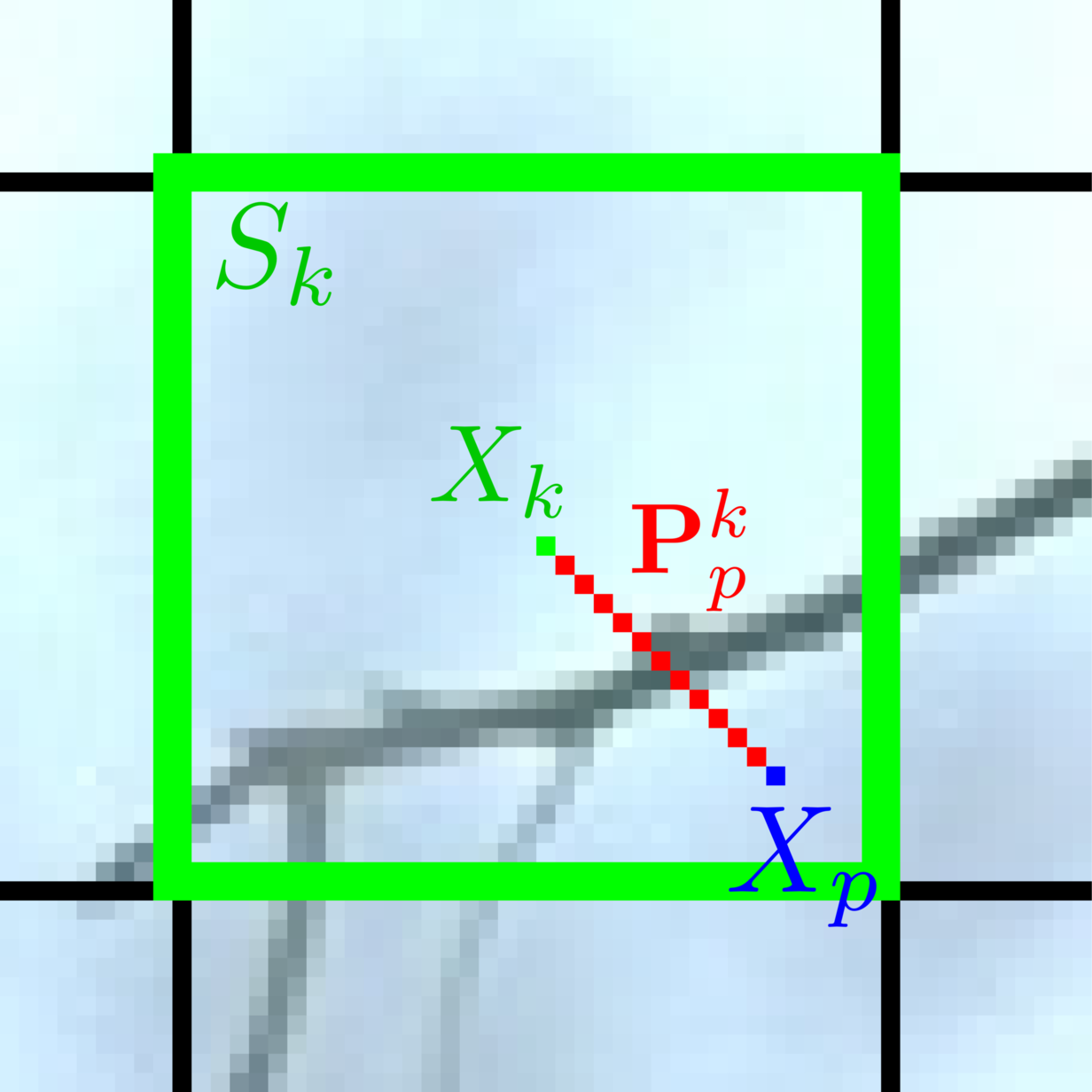}&
  \includegraphics[width=\tttt]{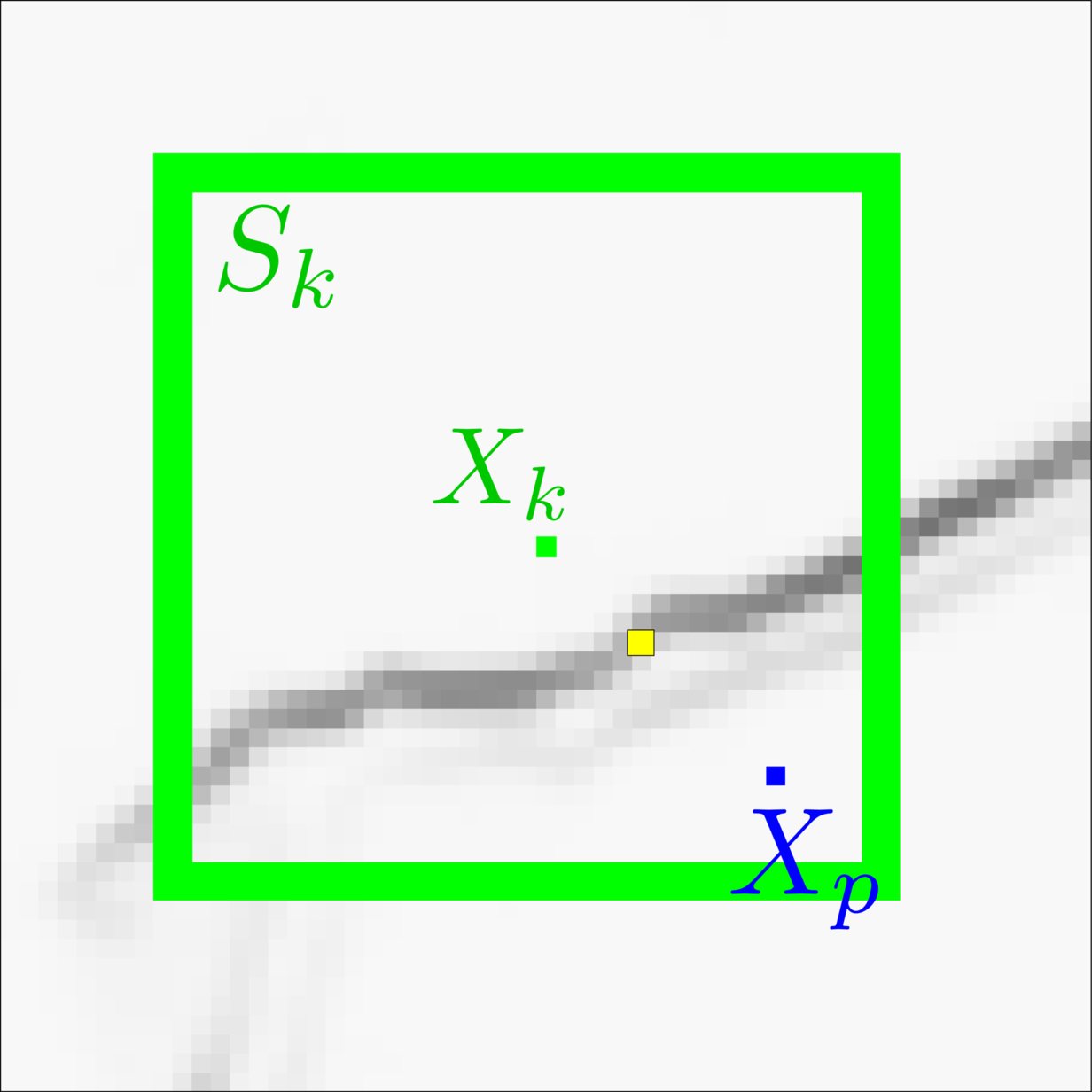}\\
 (c) Linear path $\PP^k_p$&(d) Maximum contour on $\PP^k_p$\\
  \end{tabular}
  }
  \caption{
  Illustration of SCALP first iteration starting from an initial grid (a) and using a contour prior (b).
  The linear path $\PP_p^k$ is defined for a pixel $p$ and a superpixel $S_k$ (c), and the
  maximum contour intensity (yellow pixel) is considered to prevent the crossing of image structures (d).
  }
  \label{fig:max_contour}
\end{figure}

\begin{algorithm}[ht!]
\footnotesize
  \caption{SCALP($I,K,\mathcal{C}$)}\label{SCALP}
\begin{algorithmic}[1]
\State Initialization of clusters $C_k \gets [F_k,X_k]$ from a regular grid
\State Initialization of superpixel labels $\SSS \gets 0$
\State Pre-computation of features ${\mathcal{F}_p}^{(2)}$ and ${\mathcal{F}_p}^{(1)}$  \eqref{speed_up}
\For{\emph{each iteration}}
  \State Distance $d \gets \infty$
  \For{ \emph{each $C_k$}} 
    \For{ \emph{each $p$ in a $(2r+1){\times}(2r+1)$ pixels window centered on $X_k$} }   
      \State Compute the linear path $\PP_p^k$ \citep{bresenham1965}
      \State Compute $D(p,C_k)$ using $\mathcal{C}$ and $\PP_p^k$ with \eqref{newdist} 
      \If{$D(p,C_k) < d(p)$}  
       \State $d(p) \gets D(p,C_k)$ 
        \State $\SSS(p) \gets k$
      \EndIf
    \EndFor
  \EndFor 
    \For{ \emph{each $C_k$}} 
 \State Update $[F_k,X_k]$ 
    \EndFor 
  \EndFor \vspace{-0.1cm}
  \State \textbf{return} $\mathcal{S}$
\end{algorithmic}
\end{algorithm}

\subsection{Initialization Constraint from Contour Prior}

In this section, we propose a framework 
to use an initial segmentation 
computed from a contour prior completion to constrain the superpixel decomposition.
To generate an image segmentation into regions from a contour map requires  additional steps but may help to improve the decomposition accuracy.
As stated in the introduction, 
although methods such as \citet{arbelaez2008,arbelaez2009} enable to segment an image into partitions considering a contour map,
they do not allow to control the size, the shape and the number of the produced regions.
We here propose a framework that uses an initial segmentation 
and produces a regular superpixel decomposition
within pre-segmented regions,
with control on the number of elements.
This way, we take advantage of the initial segmentation accuracy  
while providing an image decomposition into superpixels of regular sizes and shapes.
By initializing the decomposition within the computed regions, the initial superpixels better fit to the image content.
For instance, small regions can be initially segmented into one or several superpixels,
while they may fall between two initial superpixel barycenters, 
and would not be accurately segmented during the decomposition process.

{\color{black}
\subsubsection{Hierarchical Segmentation from Contour Detection}

In order to adapt an initial segmentation
to produce regular superpixels,
we propose to use a hierarchical segmentation, 
that can be computed from a contour map 
with methods such as \citet{arbelaez2008,arbelaez2009}.

Let $\mathcal{U}$ be a hierarchical segmentation that defines a contour probability map.
For any threshold, $\mathcal{U}$
produces a set of closed curves.
Regions segmented with low probability, \emph{i.e.}, with low intensity contours in $\mathcal{U}$
can be deleted with a thresholding step.
The thresholded closed contour map is denoted $\mathcal{U}_{\tau}$, 
for a threshold $\tau$, and
its corresponding decomposition into regions is denoted $\mathcal{R}_\tau=\{R_i\}$.
Figure \ref{fig:ucm_illustration}, illustrates 
the result obtained from a hierarchical segmentation for several thresholds.

\begin{figure}[h!]
\centering
{\small
\begin{tabular}{@{\hspace{0mm}}c@{\hspace{2mm}}c@{\hspace{0mm}}}
\includegraphics[width=0.225\textwidth]{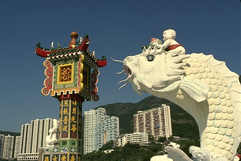}&
\includegraphics[width=0.225\textwidth]{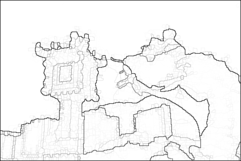}\\
Image &  $\tau=0$ \\
\includegraphics[width=0.225\textwidth]{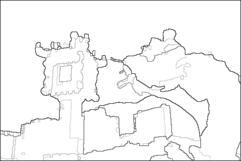}&
\includegraphics[width=0.225\textwidth]{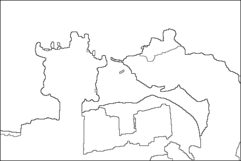}\\
$\tau=0.2$ &  $\tau=0.6$\\
\end{tabular}
} 
\caption{
{\color{black}Example of hierarchical
segmentation computed with  \citet{arbelaez2009} from a contour map obtained with \citet{dollar2013}.
The hierarchical segmentation is illustrated for several values of the threshold parameter $\tau$.}
}  
\label{fig:ucm_illustration}
\end{figure}
}

\subsubsection{Regular Decomposition into Superpixels from a Hierarchical Segmentation}

Once the hierarchical segmentation is obtained and thresholded, 
a merging step can be performed to remove the smallest areas. 
Such small regions should be merged to an adjacent one to respect the size regularity of the decomposition.
With $K$ the number of superpixels and $|I|$ the number of pixels of an image $I$,
the superpixel average size is  $s=|I|/K$.
A threshold $t\in[0,1]$ is set to merge regions containing less pixels than $s{\times}t$.
The segmentation probability of a region $R_i$ 
is $\underset{p\in \mathcal{B}(R_i)}{\min}\mathcal{U}_\tau(p)$, \emph{i.e.},
the lowest intensity among its boundary pixels $p\in \mathcal{B}(R_i)$.
The region $R_i$ is hence merged to its adjacent region $R_j$ that shares the 
boundary with the lowest segmentation probability:
\begin{equation}
\text{if} \hspace{0.3cm} |R_i| < s{\times}t ,\hspace{0.5cm}
\mathcal{R}_\tau(R_i) = \underset{j, p\in \mathcal{B}(R_i) \cap \mathcal{B}(R_j)}{\argmin} \mathcal{U}_\tau(p) . \label{spar}
\end{equation} 
These steps are illustrated in Figure \ref{fig:merging_ucm},
where the thresholding removes areas segmented with low probability 
and the merging prevents the segmentation of small regions.

\begin{figure*}[t!]
\centering
{\scriptsize
\begin{tabular}{@{\hspace{0mm}}c@{\hspace{1mm}}c@{\hspace{1mm}}c@{\hspace{1mm}}c@{\hspace{1mm}}c@{\hspace{0mm}}}
\includegraphics[height=0.16\textwidth,width=0.19\textwidth]{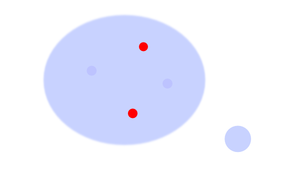}&
\includegraphics[height=0.16\textwidth,width=0.19\textwidth]{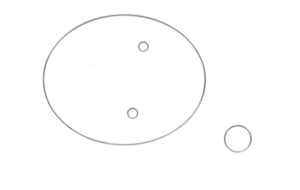}&
\includegraphics[height=0.16\textwidth,width=0.19\textwidth]{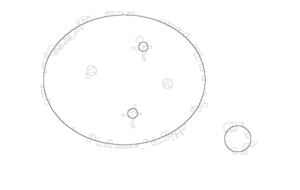}&
\includegraphics[height=0.16\textwidth,width=0.19\textwidth]{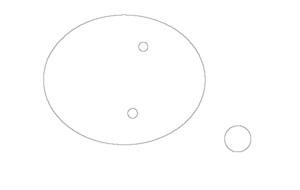}&
\includegraphics[height=0.16\textwidth,width=0.19\textwidth]{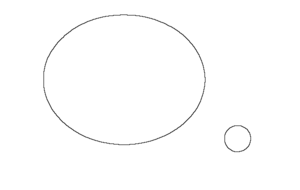}\\
(a) Image&(b) Contour map&(c) Hierarchical segmentation&(d) Thresholding&(e) Merging\\
\end{tabular}
} 
\caption{
Illustration of the thresholding and merging steps of the hierarchical segmentation (c)
computed from the contour map (b) of an image (a).
The thresholding step (d) enables to remove the areas segmented with low probability, 
\emph{i.e.}, the small blue circles and the segmentation artifacts.
Then, according to the condition in Eq. \eqref{spar}, smallest regions are removed (e), \emph{i.e.},
the red circles, although they have higher segmentation probability than the blue ones.
} 
\label{fig:merging_ucm}
\end{figure*}

A partition step then adds initial superpixels in the remaining regions.
If the resulting number of regions is lower than the number of superpixels $K$, 
superpixels are added according to the region size $|R_i|$.
In a region $R_i$, $\lfloor|R_i|/s\rfloor$ sub-regions are initialized by a spatial K-means approach \citep{lloyd82}, 
regardless of the color information.

The proposed approach thus adapts well to the superpixel size, and is
not sensitive to threshold settings.
The framework using the contour prior as a hard constraint
is illustrated in Figure \ref{fig:HC},
and will be denoted SCALP+HC in the following.
Note that although we here consider the segmentation as a hard constraint
to enforce the respect of image objects,
the image partition can be used to only initialize the superpixel repartition,
instead of using a regular grid.

\begin{figure*}[t!]
\centering
 \includegraphics[width=0.84\textwidth]{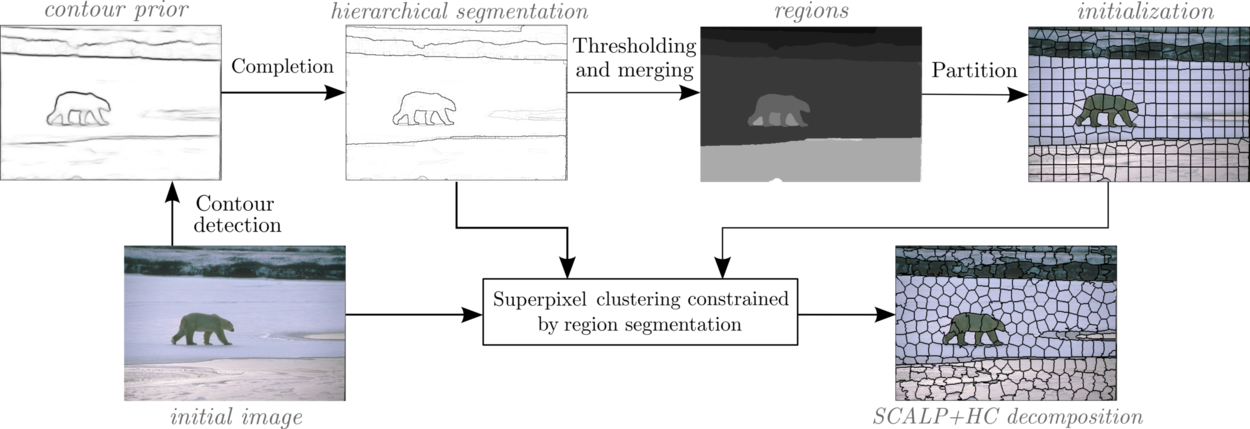}
 \caption{
 SCALP+HC framework using the contour prior as a hard constraint to provide an initial segmentation.
 A completion step produces a hierarchical segmentation from the contour map.
 Regions segmented with low probability are removed by a thresholding step, 
 and too small regions compared to the given superpixel size are merged to adjacent regions. 
 These regions can then be partitioned to provide a superpixel initialization.
 SCALP is independently performed in each region, taking advantage of the contour map accuracy while
 producing a regular decomposition that adapts well to local image content.
 }
 \label{fig:HC}
\end{figure*}

\section{Results}

\subsection{\label{valid}Validation Framework}

\subsubsection{Dataset}

We use the standard Berkeley segmentation dataset (BSD) \citep{martin2001} 
to evaluate our method and compare to state-of-the-art ones. 
This dataset contains 200 various test images of size $321{\times}481$ pixels.
At least 5 human ground truth decompositions are provided per image
to compute evaluation metrics in terms of
consistency to the image objects, and
contour adherence.

\subsubsection{Metrics}

To evaluate our method and compare to other state-of-the-art frameworks, 
we use standard superpixel evaluation metrics.
The achievable segmentation accuracy (ASA)
measures the consistency of the decomposition to the image objects.
Boundary recall (BR) and contour density (CD) are used to measure
the detection accuracy according to the ground truth image contours.
We also propose to evaluate the contour detection performance of the superpixel methods
by computing the precision-recall (PR) curves \citep{martin2004}.  
Finally, we report the shape regularity criteria (SRC) \citep{giraud2017_src} 
that measures the regularity of the produced superpixels.

For each image $I$ of the dataset, human ground truth segmentations are provided.
The reported results are averaged on all segmentations.
A ground truth decomposition 
is denoted $\TT=\{T_i\}_{i\in \{1,\dots ,|\TT|\}}$, with $T_i$ a segmented region,
and we consider a superpixel segmentation $\SSS=\{S_k\}_{k\in \{1,\dots,|\SSS|\}}$.

\subsubsection*{Respect of image objects} 

For each superpixel $S_k$ of the decomposition result, 
the largest possible overlap with a ground truth region $T_i$ can be computed with
ASA, which computes the average overlap percentage for all superpixels: 
\begin{equation}
\text{ASA}(\SSS,\TT) = \frac{1}{|I|}\sum_{k}\max_i|S_k\cap T_i|.  \label{asa}
\end{equation}

Note that recent works, \emph{e.g.}, \citet{giraud_jei_2017,stutz2016} show
the high correlation between the undersegmentation error \citep{neubert2012} 
and the ASA metric \eqref{asa}.
Therefore, the ASA measure is sufficient to evaluate 
the respect of image objects.

\subsubsection*{Contour Detection} 
The BR metric measures the detection of ground truth contours $\mathcal{B(\TT)}$ 
by the computed superpixel boundaries $\mathcal{B}(\SSS)$.
If a ground truth contour pixel has a decomposition contour pixel
at an $\epsilon$-pixel distance,
it is considered as detected, and
BR is defined as the percentage of detected ground truth contours: 
\begin{equation}
\text{BR}(\SSS,\TT) = \frac{1}{|\mathcal{B}(\TT)|}\sum_{p\in\mathcal{B}(\TT)}\delta[\min_{q\in\mathcal{B}(\SSS)}\|p-q\|< \epsilon]  ,   \label{br}
\end{equation}
\noindent with 
$\delta[a]=1$ when $a$ is true and $0$ otherwise,
and $\epsilon$ set to $2$ as in, \emph{e.g.}, \citet{vandenbergh2012}. 
However, this measure only considers true positive detection,
and does not consider the number of produced superpixel contours.
Therefore, methods that produce very irregular superpixels are likely to 
have high BR results.
To overcome this limitation, as in  \citet{machairas2015,zhang2016},
the contour density (CD)
can be considered to penalize a large number of superpixel boundaries $\mathcal{B}(\SSS)$.
In the following, we report CD over BR results, with CD defined as:
\begin{equation}
\text{CD}(\SSS) = \frac{|\mathcal{B}(\SSS)|}{|I|}  .  \label{cd}
\end{equation}
When considering decompositions with the same CD, \emph{i.e.}, 
the same number of superpixel boundaries, 
BR results can be relevantly compared.
Higher BR with the same CD indicates that the produced superpixels better detect image
contours.

The PR framework \citep{martin2004}
enables to measure the contour detection performances. 
PR curves consider both boundary recall (BR) \eqref{br},  
\emph{i.e.},  true positive detection, or percentage of detected ground truth contours,
and precision $\text{P}=|\mathcal{B}(\SSS)\cap\mathcal{B}(\TT)|/|\mathcal{B}(\SSS)|$, \emph{i.e.}, 
percentage of accurate detection on produced superpixel boundaries.
They are computed from an input map, where the intensity in each pixel represents the confidence of being 
on an image boundary.
As in \citet{vandenbergh2012}, 
we consider the average of superpixel boundaries obtained at different scales,
ranging from 25 to 1000 superpixels, 
 to provide a contour detection.  
In the following, to summarize the contour detection performances, 
we report the maximum F-measure defined as: 
\begin{equation}
 \text{F} = \frac{2.\text{P}.\text{BR}}{\text{P}+\text{BR}} . \label{fmeasure}
\end{equation}

\subsubsection*{Shape Regularity} 
To evaluate the regularity of a decomposition in terms of superpixel shape, 
we use the shape regularity criteria (SRC) introduced in \citet{giraud2017_src},
and defined for a decomposition $\SSS$ as follows: 
\begin{equation}
\text{SRC}(\SSS) = \sum_k{\frac{|S_k|}{|I|}}.\frac{\text{CC}(H_{S_k})}{\text{CC}(S_k)}\text{V}_{\text{xy}}(S_k) , \label{circu}
\end{equation}
\noindent where 
$\text{V}_\text{xy}(S_k) = {\min(\sigma_x,\sigma_y)/\max(\sigma_x,\sigma_y)}$,
evaluates the balanced repartition of the shape $S_k$
with $\sigma_x$ and $\sigma_y$ 
the square root of standard deviations of pixel positions $x$ and $y$ in $S_k$,
 $H_{S_k}$ is the convex hull containing $S_k$, and
CC measures the ratio between the perimeter and the area of the considered shape.
The SRC measure has been proven to be more robust and accurate than the
circularity metric \citep{schick2012} used in several superpixel works.

\newcommand{\www}{0.24\textwidth}
\newcommand{\hhh}{0.20\textwidth}
\begin{figure*}[t]
\centering
{\footnotesize
\begin{tabular}{@{\hspace{0mm}}c@{\hspace{1mm}}c@{\hspace{1mm}}c@{\hspace{1mm}}c@{\hspace{0mm}}}
\vspace{0.1cm}
\includegraphics[height=\hhh,width=\www]{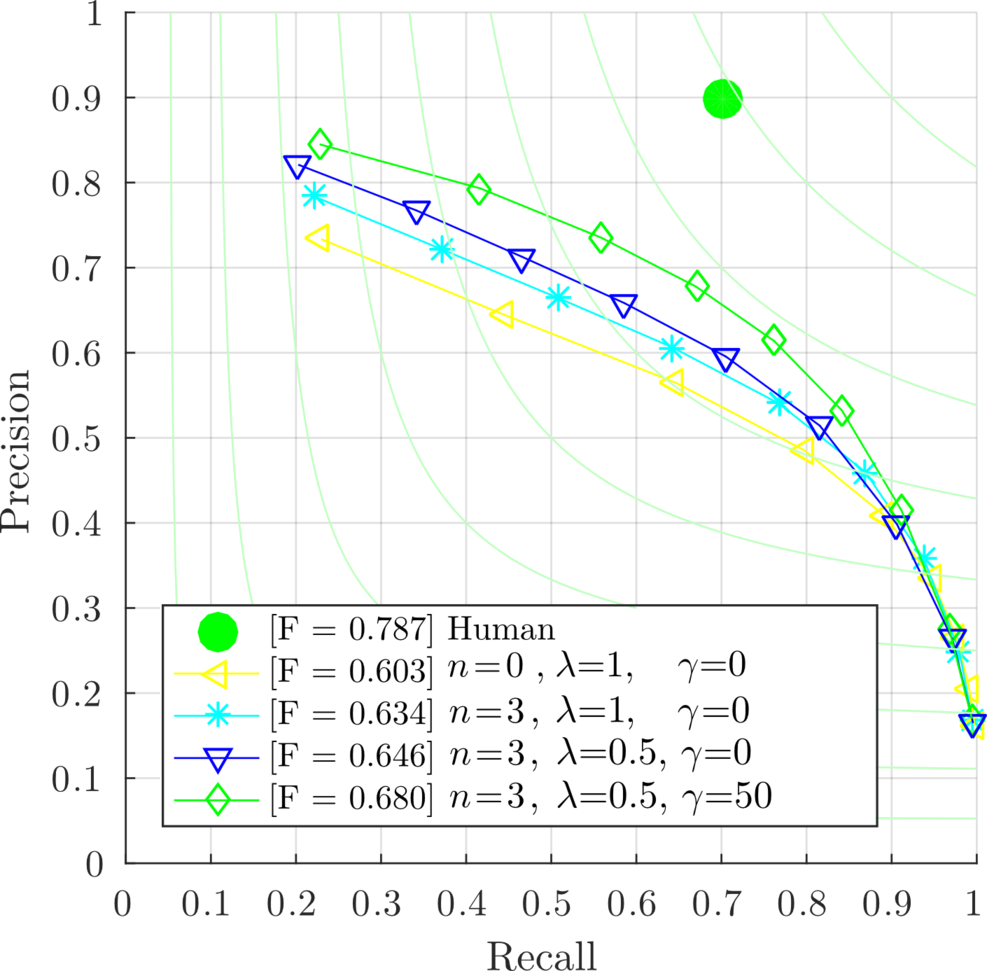}&
\includegraphics[height=\hhh,width=\www]{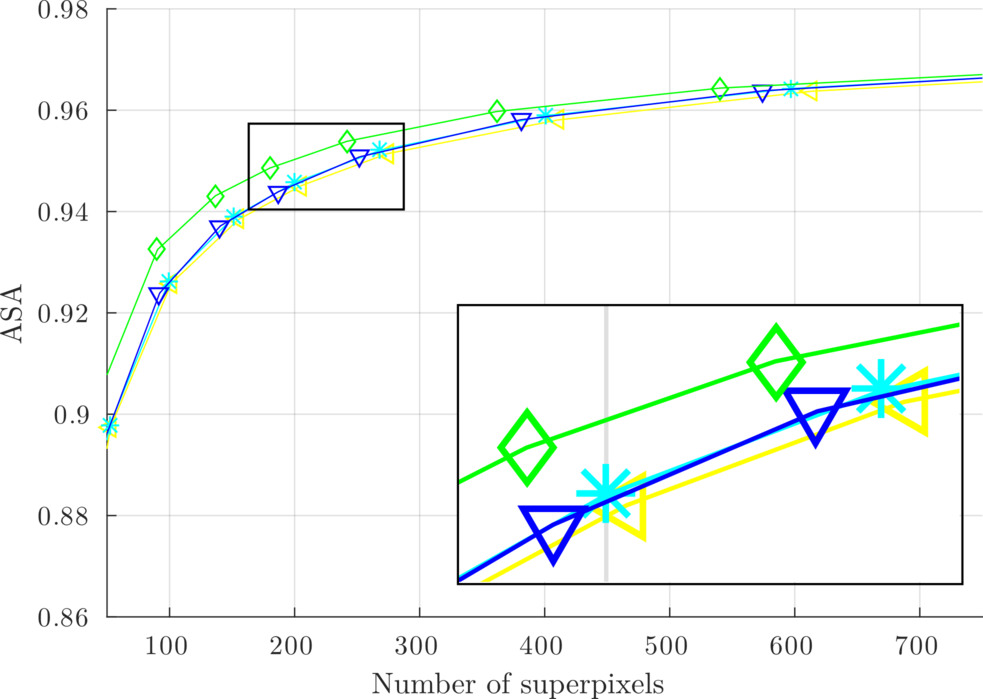}&
\includegraphics[height=\hhh,width=\www]{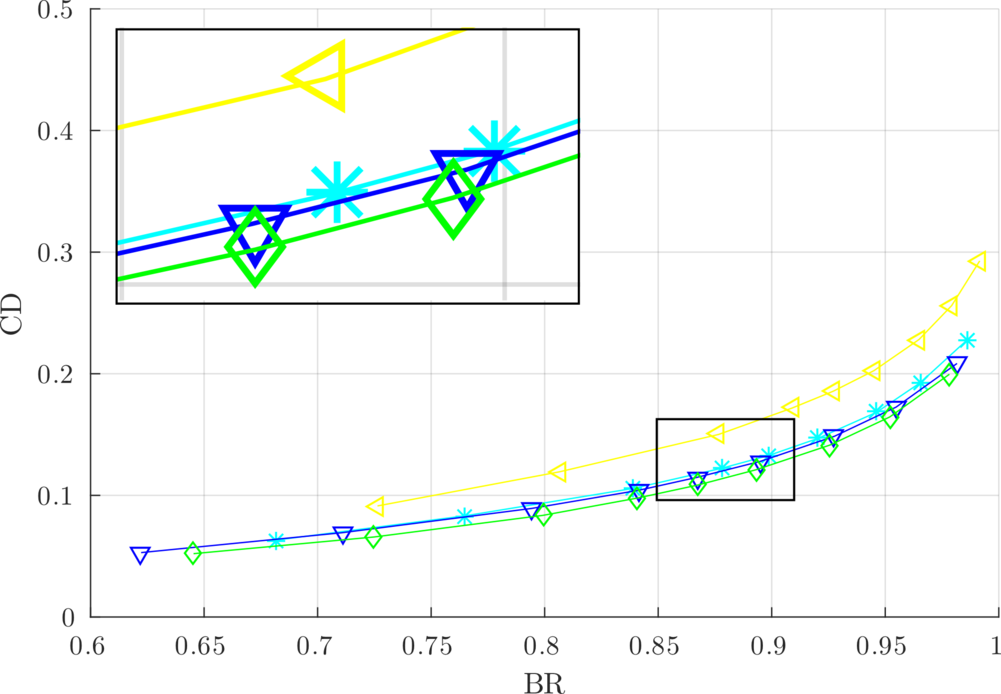}&
\includegraphics[height=\hhh,width=\www]{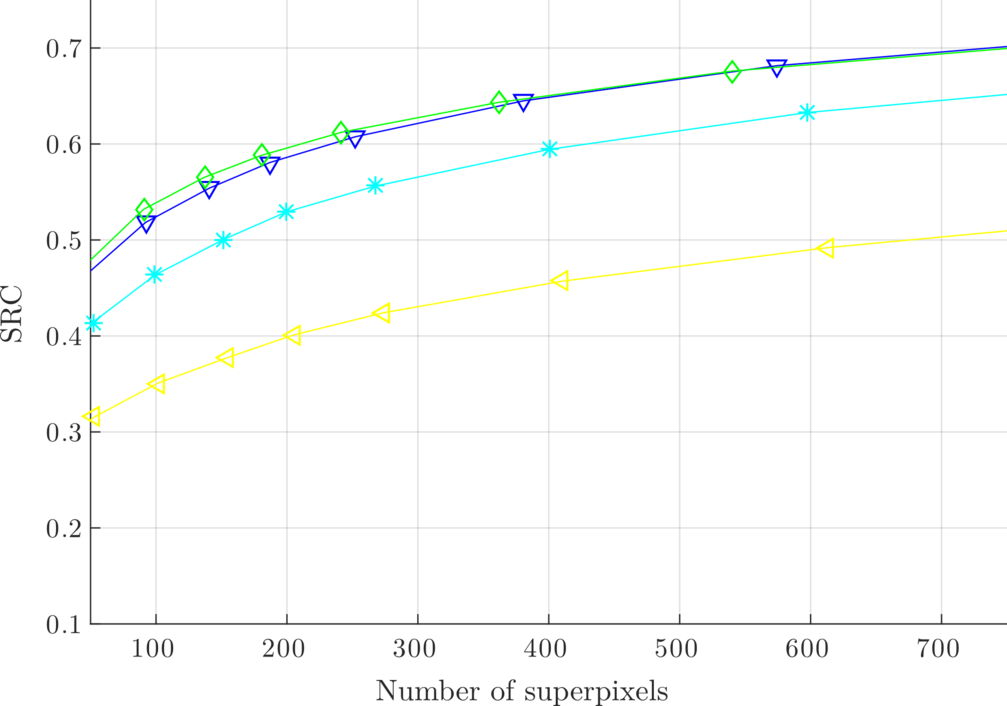}\\
\includegraphics[height=\hhh,width=\www]{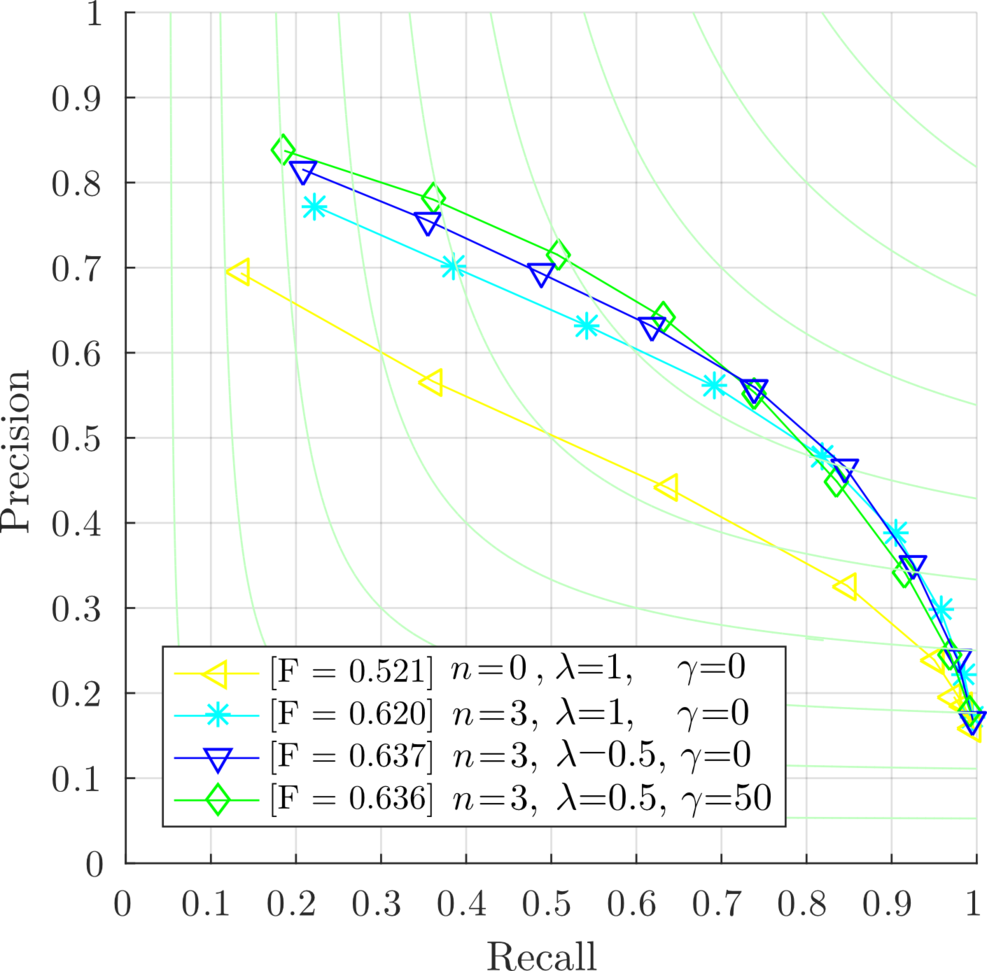}&
\includegraphics[height=\hhh,width=\www]{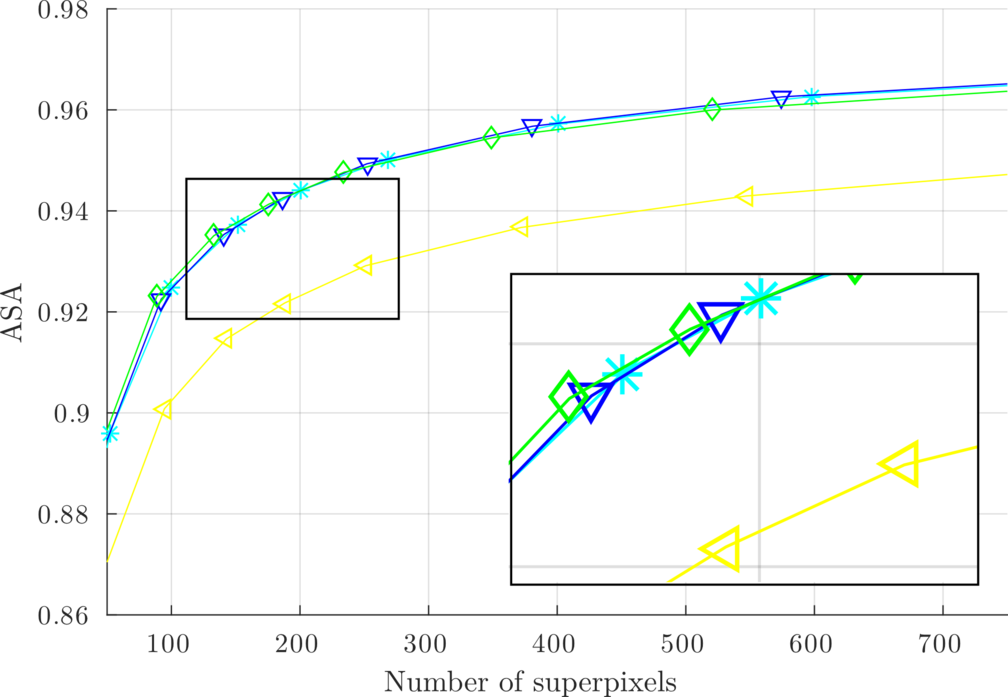}&
\includegraphics[height=\hhh,width=\www]{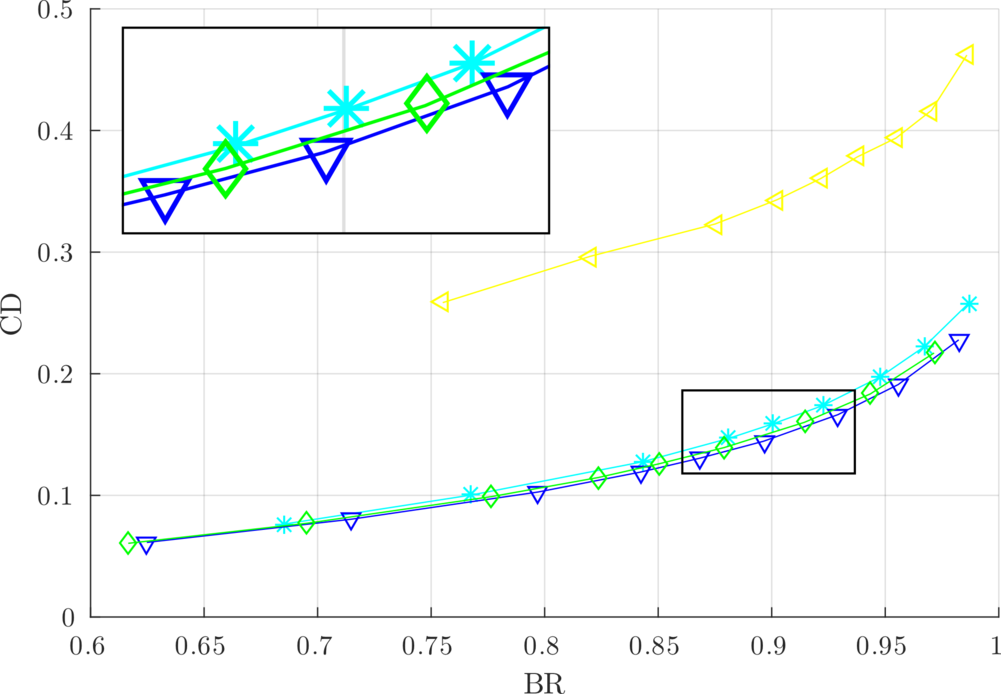}&
\includegraphics[height=\hhh,width=\www]{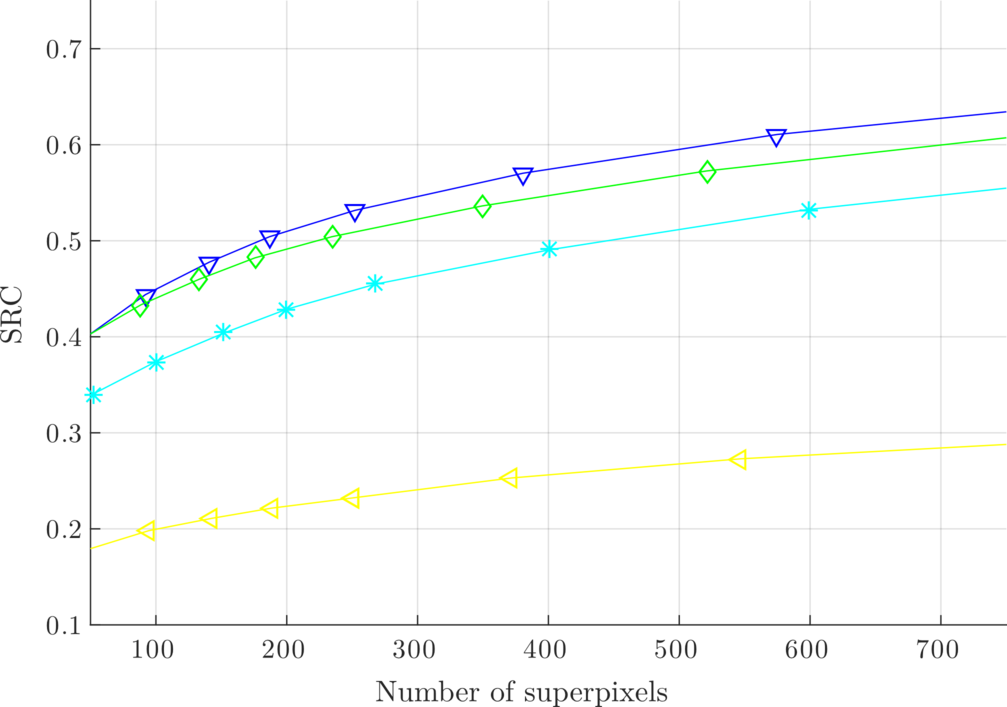}\\
\end{tabular}
}
\caption{Evaluation of the SCALP distance parameters on PR, ASA, CD over BR and SRC 
metrics on initial (top) and noisy images (bottom).
Each contribution increases the decomposition accuracy for both initial and noisy images.
The parameter $n$ in \eqref{patch} sets the use of the neighborhood in
the clustering distance, and $\lambda$ in \eqref{newdist0} and $\gamma$ in \eqref{newweight} 
respectively set the influence of the color distance and contour prior 
along the linear path.
With $n=0$ in
\eqref{patch}, $\lambda=1$ in \eqref{newdist0} and $\gamma=0$ in \eqref{newweight},
the framework is reduced to the method of \citet{chen2017}.
}
\label{fig:distance}
\end{figure*}

\subsubsection{\label{param_settings}Parameter Settings}

SCALP was implemented with MATLAB using single-threaded C-MEX code,
on a standard Linux computer.
We consider in $d_c$ and $d_s$ more advanced spectral features introduced in \citet{chen2017}.
They are designed in a high dimensional space ($6$ for color, and $4$ for spatial features).
The linear path 
between a pixel and the barycenter of a superpixel is computed 
with \citet{bresenham1965}.
In \eqref{patch}, the parameter $\sigma$ 
is empirically set to $40$ 
and  $\mathcal{P}(p)$ is defined as a $7{\times}7$ pixel neighborhood around a pixel $p$, so
$n=3$.
In the proposed color distance \eqref{newdist0}, $\lambda$ is set to $0.5$, and
$\gamma$ to $50$ in  \eqref{newweight}.
The compactness parameter $m^2$ is set to $0.075r^2$ in the final distance \eqref{newdist}, as in \citet{chen2017}.
This parameter offers a good trade-off between adherence to contour prior and compactness. 
The number of clustering iterations is set to $5$, contrary to \citet{chen2017} that uses $20$ iterations,
since SCALP converges faster.
Unless mentioned, when used, the contour prior is computed with \citet{dollar2013}. 
Finally, when using the contour prior as a hard constraint (SCALP+HC), 
we respectively set parameters $t$ and $\tau$ during the region fusion \eqref{spar} to $0.15$ and $0.4$,
and compute a hierarchical segmentation with \citet{arbelaez2009}.
In the following, when reporting results on noisy images, we use a
white additive Gaussian noise of variance 20.

\newcommand{\pppp}{0.155\textwidth}
\newcommand{\pppph}{0.125\textwidth}
\begin{figure}[t!]
\centering
{\scriptsize
\begin{tabular}{@{\hspace{0mm}}c@{\hspace{1mm}}c@{\hspace{1mm}}c@{\hspace{0mm}}}
Initial image& 
$n$=$0$, $\lambda$=$1$, $\gamma$=$0$ &
  {\color{blue}$n$=$3$}, $\lambda$=$1$, $\gamma$=$0$\\
 \includegraphics[width=\pppp]{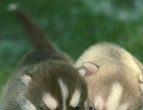}&
\includegraphics[width=\pppp]{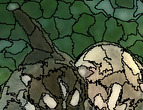}&
\includegraphics[width=\pppp]{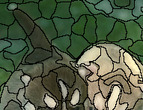}\\
\end{tabular}
\begin{tabular}{@{\hspace{0mm}}c@{\hspace{1mm}}c@{\hspace{0mm}}}
$n$=$3$, {\color{blue}$\lambda$=$0.5$}, $\gamma$=$0$&
$n$=$3$, $\lambda$=$0.5$, {\color{blue}$\gamma$=$50$} \\
\includegraphics[width=\pppp]{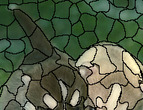}&
\includegraphics[width=\pppp]{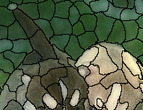}\\
\end{tabular}
} 
\caption{Visual impact of the distance parameters.
Each contribution progressively increases the decomposition accuracy by adding more relevant features.
} 
\label{fig:dist_img}
\end{figure}

\subsection{Influence of Parameters}
\label{ssec:influ_params}

\subsubsection{Distance Parameters}
\label{sssec:dist}

We first measure the influence of the distance parameters in \eqref{newdist} on SCALP performances. 
In Figure \ref{fig:distance}, 
we report results on PR, ASA, CD over BR and SRC curves for different distance settings, on both initial
and noisy BSD images.
First, we note that the neighboring pixels ($n=3$ in \eqref{patch}) increase the decomposition accuracy.
The color features ($\lambda=0.5$ in \eqref{newdist0})  
also improve the results, in terms of respect of image objects and regularity.
Finally, the contour prior ($\gamma=50$ in \eqref{newweight})
along the linear path enables to reach high contour detection (PR) and also increases the
performances on superpixel metrics.
On noisy images, the accuracy of the contour prior is degraded,
but it still provides higher ASA performances on respect of image objects.
Note that if $n=0$ in
\eqref{patch}, $\lambda=1$ in \eqref{newdist0} and $\gamma=0$ in \eqref{newweight},
the method is reduced to the framework of \citet{chen2017}.

Figure \ref{fig:dist_img} illustrates the decomposition result for these distance parameters on a BSD image.  
With only the features used in \citet{chen2017}, \emph{i.e.}, with $n=0$, $\lambda=1$, $\gamma=0$, 
the decomposition boundaries are very irregular.
The neighborhood information greatly reduces the noise at the superpixel boundaries.
The color distance on the linear path improves the superpixel regularity and provides more compact shapes.
Finally, the contour information enables to more efficiently catch the object structures and 
to respect the image contours.

\subsubsection{Contour Prior}

We also investigate the influence of the contour prior.
The computation of the contour information should not be sensitive to textures and high local image gradients, and many
efficient methods have been proposed in the literature (see for instance references in \citet{arbelaez2011}).
The performances of our method  
are correlated to the contour detection accuracy, but we demonstrate that improvements are obtained even with basic
contour detections.

A fast way to obtain such basic contour detection, 
which would be robust to textures and high gradients, 
is to average the boundaries of superpixel decompositions obtained at multiple scales.
We propose to consider the same set of scales $\mathcal{K}=\{K\}$ 
used for computing the PR curves.
All resulting superpixels boundaries $\mathcal{B}(\SSS^K)$ of a decomposition $\SSS^K$, 
computed at scale $K\in \mathcal{K}$ are averaged:
\begin{equation}
 \bar{\mathcal{B}} = \frac{1}{|\mathcal{K}|}\sum_{K\in \mathcal{K}}{\mathcal{B}(\mathcal{S}^K)}. \label{ms}
 \end{equation}
 The average $\bar{\mathcal{B}}$ can then be thresholded 
 to remove low confidence boundaries
and provide an accurate contour prior $\mathcal{C}$.
Figure \ref{fig:ms} illustrates the computation of the contour prior $\mathcal{C}$
from superpixel boundaries. 
Note that the decompositions at multiple scales $K$ are independent 
and can be computed in parallel.

In Figure \ref{fig:contour}, we provide results obtained by using different contour prior:
the contour detection from multiple scale decompositions, 
using \citet{achanta2012} with a threshold of the boundary map \eqref{ms} set to $0.5$, 
from the {globalized probability of boundary} algorithm \citep{maire2008},
a method using learned sparse codes of patch gradients \citep{xiaofeng2012}, 
and from a structured forests approach  \citep{dollar2013}.
The results on all metrics are improved with the accuracy of the provided contour detection.
Nevertheless, we note that even simple contour priors
enable to improve the superpixel decomposition adherence to boundaries.
In the following, reported results are computed using \citet{dollar2013}.

\newcommand{\ttthc}{0.11\textwidth}
\newcommand{\tttc}{0.155\textwidth}
 \begin{figure}[t!]
\centering
{\footnotesize
  \begin{tabular}{@{\hspace{0mm}}c@{\hspace{1mm}}c@{\hspace{1mm}}c@{\hspace{0mm}}}
 \includegraphics[width=\tttc]{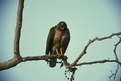}&
   \includegraphics[width=\tttc]{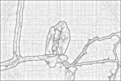}&
   \includegraphics[width=\tttc]{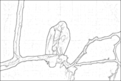}\\
   Initial image&Boundary average $\bar{\mathcal{B}}$& 
   Contour map $\mathcal{C}$\\
  \end{tabular}
  }
  \caption{Illustration of contour detection from superpixel boundaries
  computed with \cite{achanta2012}
   at multiple scales. Boundaries are averaged and thresholded to
  provide, in a fast and simple manner, an accurate contour prior. 
  The threshold of the boundary map \eqref{ms} is set to $0.5$.
  }
  \label{fig:ms}
\end{figure}

\newcommand{\hw}{0.46\textwidth}
\newcommand{\wh}{0.32\textwidth}
\newcommand{\hhw}{160pt}
\newcommand{\ww}{0.23\textwidth}
\newcommand{\hh}{100pt}
\newcommand{\hhi}{75pt}

\begin{figure}[t!] 
\centering
{\footnotesize
\begin{tabular}{@{\hspace{0mm}}c@{\hspace{1mm}}c@{\hspace{0mm}}}
\includegraphics[height=0.22\textwidth,width=0.23\textwidth]{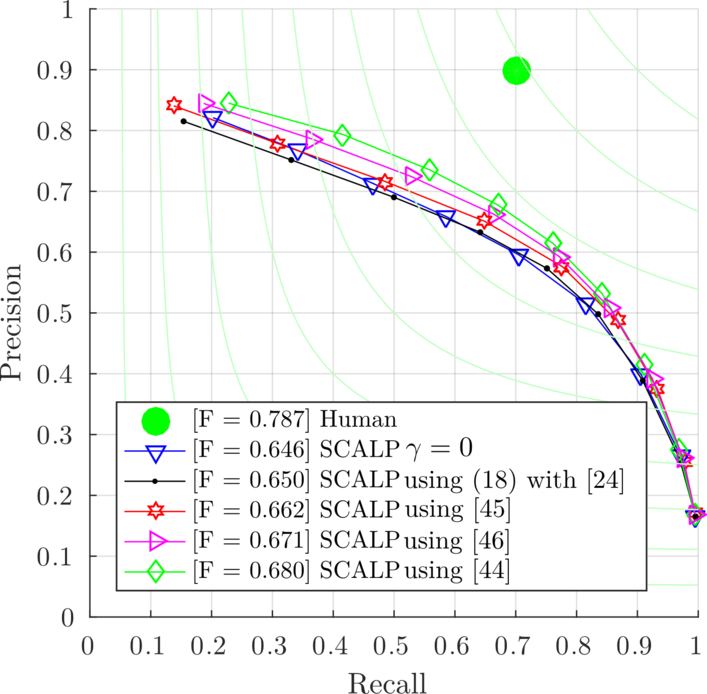}&
\includegraphics[height=0.22\textwidth,width=0.23\textwidth]{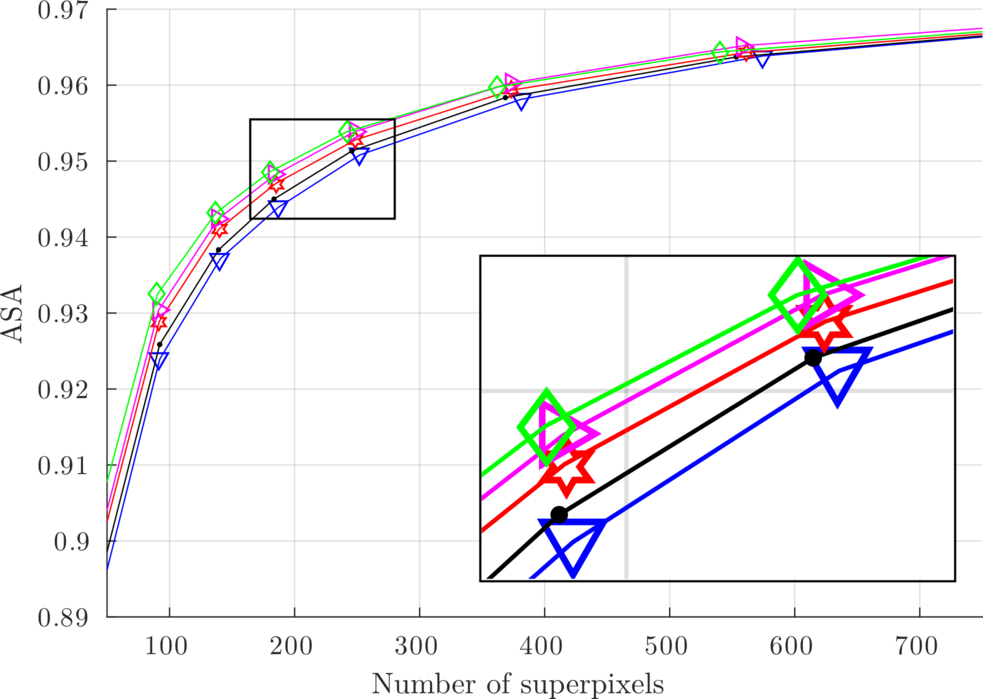}\\
\end{tabular}
}
\caption{
Evaluation of different contour priors.
Even a simple contour detection from averaged superpixel boundaries at multiple scales
improves the  adherence to image contours.
} 
\label{fig:contour}
\end{figure}

\newcommand{\lllh}{0.21\textwidth}
\newcommand{\llll}{0.24\textwidth}
\begin{figure*}[t!]
\centering
{\footnotesize
\begin{tabular}{@{\hspace{0mm}}c@{\hspace{1mm}}c@{\hspace{1mm}}c@{\hspace{1mm}}c@{\hspace{0mm}}}
\includegraphics[height=\hhh,width=\www]{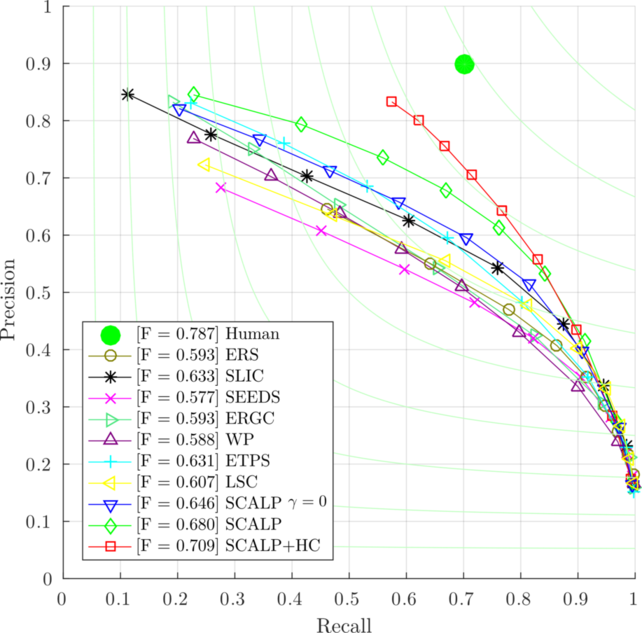}&
\includegraphics[height=\hhh,width=\www]{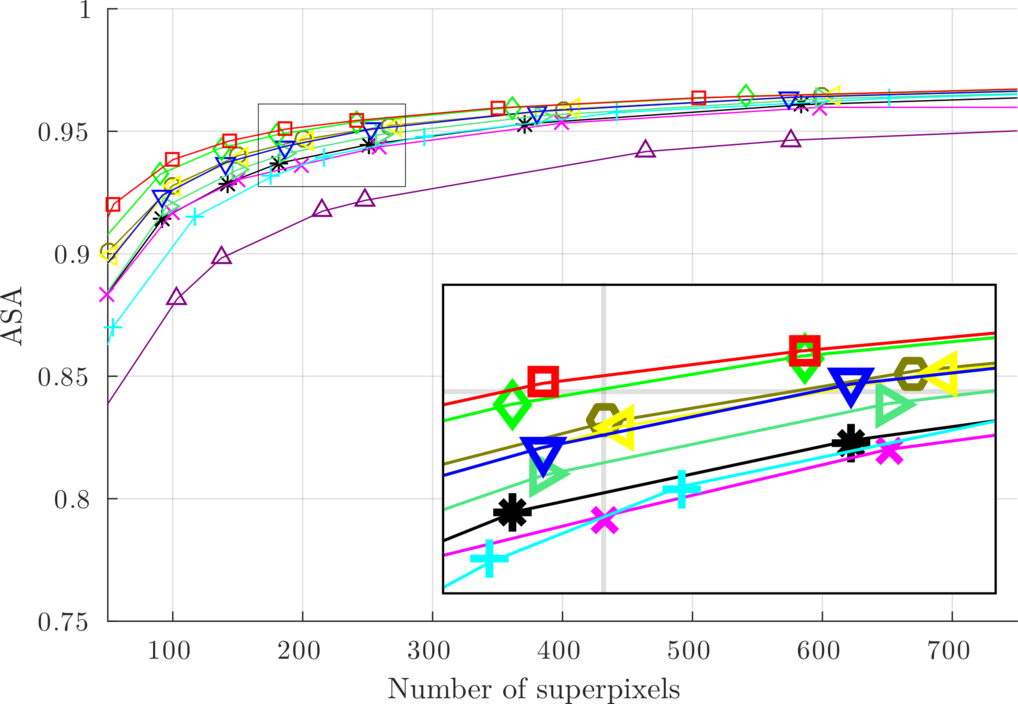}&
\includegraphics[height=\hhh,width=\www]{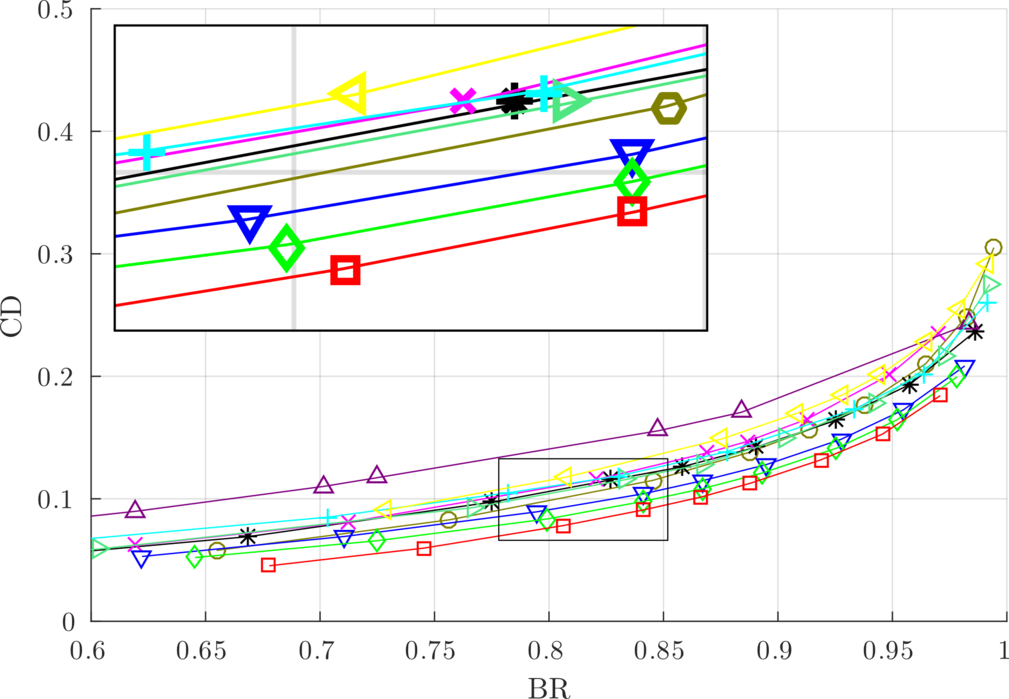}&
\includegraphics[height=\hhh,width=\www]{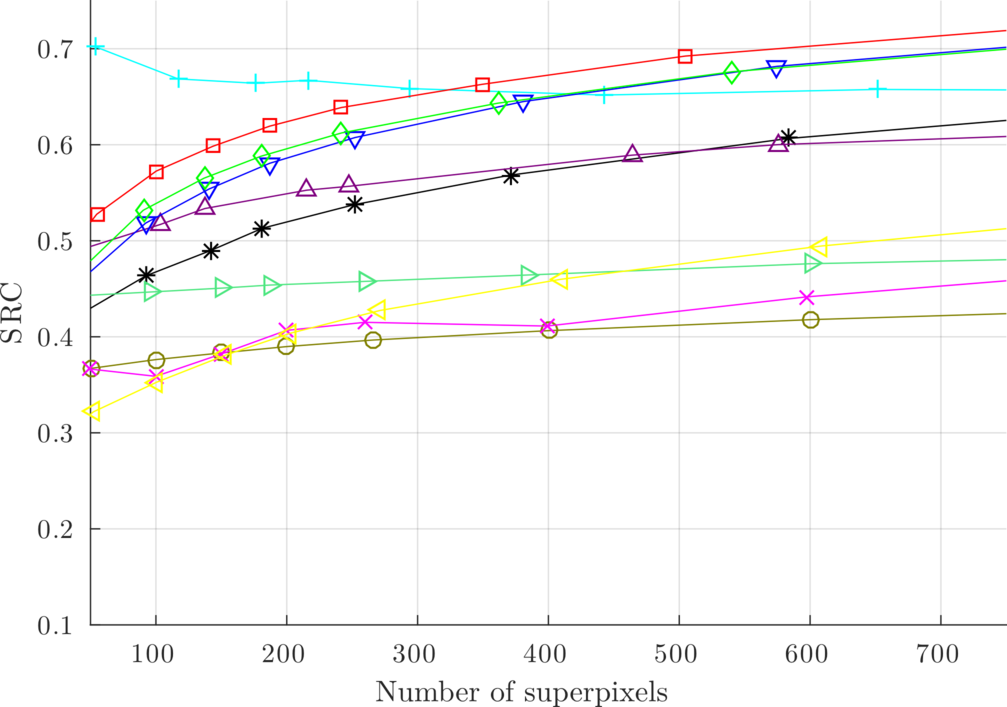}\\
\includegraphics[height=\hhh,width=\www]{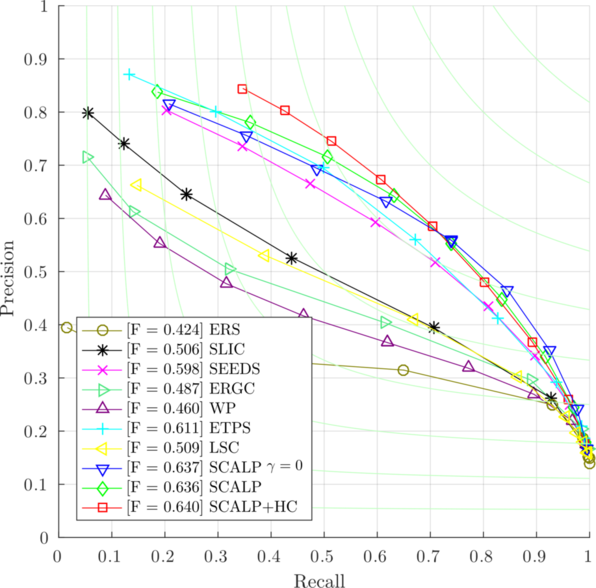}&
\includegraphics[height=\hhh,width=\www]{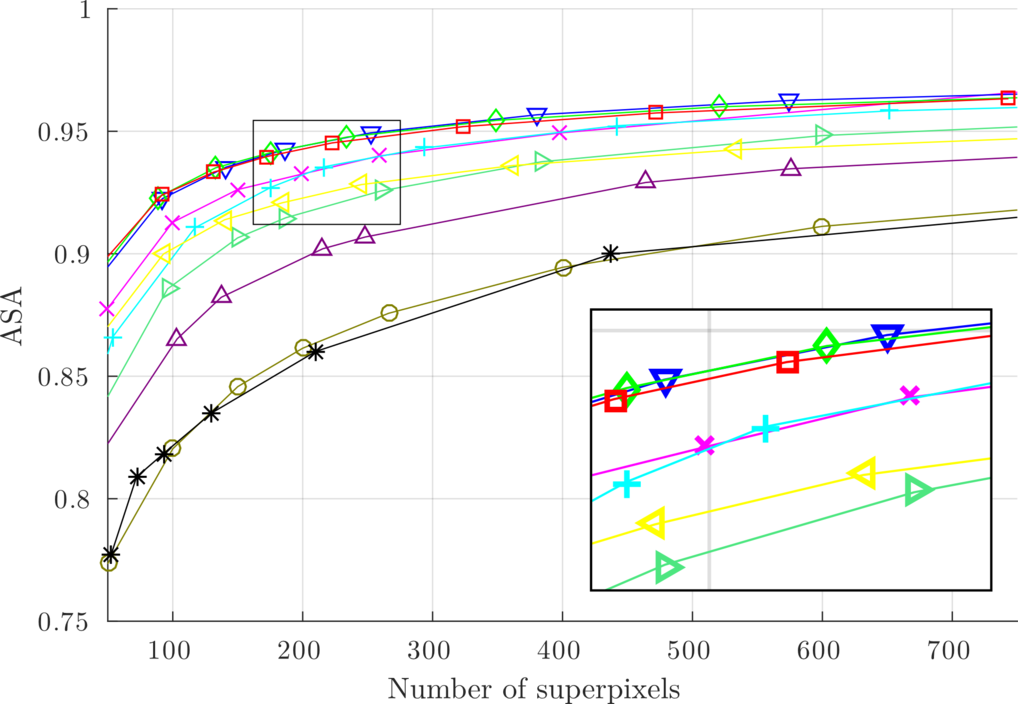}&
\includegraphics[height=\hhh,width=\www]{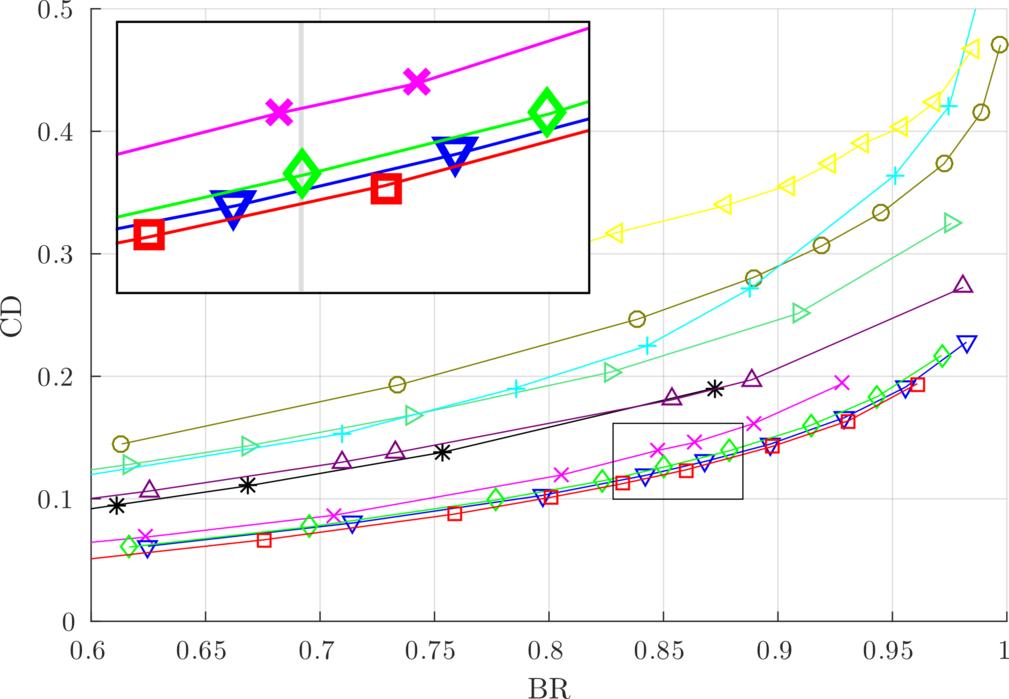}&
\includegraphics[height=\hhh,width=\www]{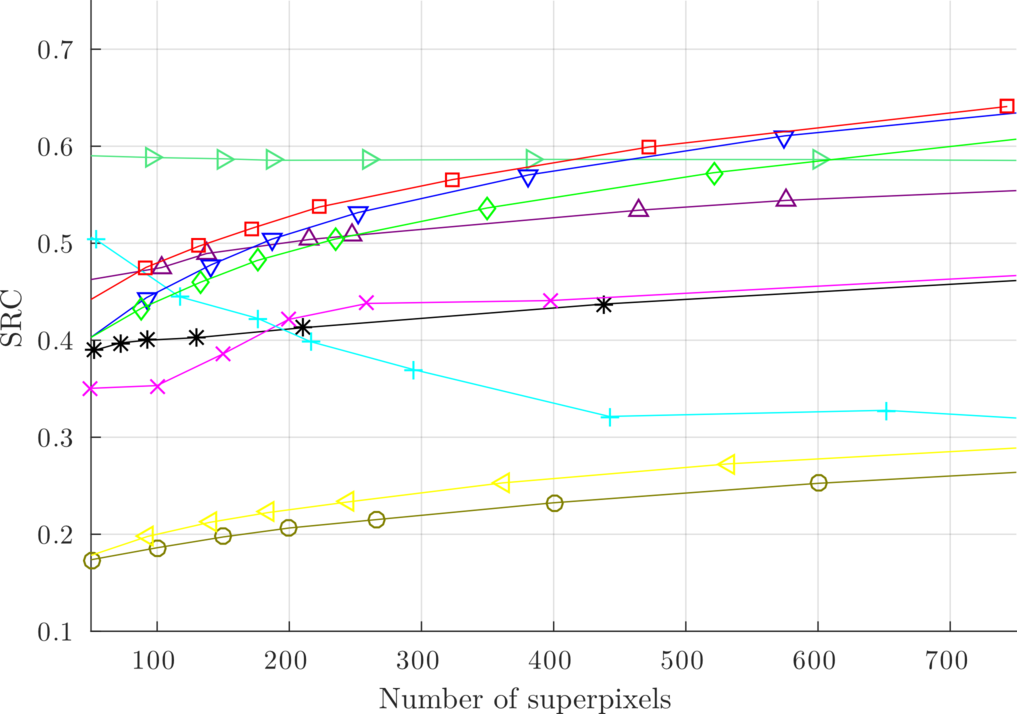}\\
\end{tabular}
} 
\caption{Comparison between the proposed SCALP framework and the state-of-the-art methods 
on contour detection (PR) and superpixel metrics (ASA, CD over BR and SRC) on the BSD test set. 
SCALP outperforms the other methods on both initial images (top) and noisy images (bottom).
Moreover the results obtained with SCALP without using 
a contour prior (SCALP $\gamma=0$), still
outperform the most accurate compared methods \citep{chen2017,liu2011}.
}  
\label{fig:soa_metrics}
\end{figure*}

\subsection{{\label{sec:soa}}Comparison with State-of-the-Art Methods}

We compare the proposed SCALP approach to the following state-of-the-art methods:
ERS \citep{liu2011},
SLIC \citep{achanta2012},
SEEDS \citep{vandenbergh2012},
ERGC \citep{buyssens2014},
Waterpixels (WP) \citep{machairas2015},
ETPS \citep{yao2015} and
LSC \citep{chen2017}.
Reported results are computed 
with codes provided by the authors, in their default settings.

In Figure \ref{fig:soa_metrics},
we provide PR curves with the maximum F-measure,  
and report the standard 
ASA \eqref{asa}, CD \eqref{cd} over BR \eqref{br} and SRC \eqref{circu} metrics
on both initial (top) and noisy (bottom) images.
SCALP outperforms the compared methods on the respect of image objects and contour detection metrics, 
providing for instance higher F-measure ($\text{F}=0.680$), 
while producing regular superpixels.
The regularity is indeed increased compared to SLIC and LSC,
and is among the highest of state-of-the-art methods.
The ASA evaluates the consistency of a superpixel decomposition with respect to
the image objects, enhancing the largest possible overlap.
Therefore, best ASA results obtained with SCALP indicate that the 
superpixels are better contained in the image objects.
Using the contour prior as a hard constraint (SCALP+HC), our method even reaches higher performances, 
for instance
with $\text{F}=0.709$.
Moreover, SCALP results obtained without using 
a contour prior, \emph{i.e.}, setting $\gamma$ to $0$ in \eqref{newweight}, still
outperform the ones of the most accurate compared methods LSC and ERS.
Finally, we can underline the fact that SCALP results outperform the ones of all the compared state-of-the-art methods
on contour detection and respect of image objects metrics while producing regular superpixels.
{\color{black}
The gain of performances is further assessed by the result of a 
paired Student test on the ASA result sets.
A very low $p$-value  ($<0.002$) is obtained by comparing the result set of SCALP to the one 
of ERS \citep{liu2011}, the best compared method in terms of accuracy,
which demonstrates the significant increase of performances obtained with SCALP.}
Generally, to enforce the regularity may reduce the contour adherence \citep{vandenbergh2012}, 
but SCALP succeeds in providing regular but accurate superpixels.
This regularity property has been proven crucial for object recognition \citep{gould2014},
tracking \citep{reso2013} and segmentation and labeling applications \citep{strassburg2015influence}.
Therefore, the use of SCALP may increase the accuracy of such superpixel-based methods.

\newcommand{\tes}{2.5mm}
\begin{table}[t!]
 \caption{
Comparison to state-of-the-art methods on initial$|$noisy images.
The maximum F-measure \eqref{fmeasure} is computed as described in Section \ref{valid}.
ASA \eqref{asa} and SRC \eqref{circu} results are given for $K=250$ superpixels, and
CD \eqref{cd} results for $\text{BR}=0.8$  \eqref{br}.
Blue (bold) and red (underlined) respectively indicate best and second results
}
\renewcommand{\arraystretch}{1}
\begin{center}
{ \footnotesize
 \begin{tabular}{@{\hspace{1mm}}p{1.6cm}@{\hspace{\tes}}c@{\hspace{\tes}}c@{\hspace{\tes}}c@{\hspace{\tes}}c@{\hspace{1mm}}}
 \cline{1-5}
 { Method}& F \eqref{fmeasure}&{ ASA \eqref{asa}} &{ CD/BR \eqref{cd}} &{ SRC \eqref{circu}} \\
 \hline
 { ERS }		
 &$0.593|0.424$	
&$0.951|0.872$	&$0.099|0.227$	&$0.395|0.213$\\ 
 { SLIC  }	
 &$0.633|0.506$	
 	&$0.944|0.867$	&$0.106|0.156$	&$0.537|0.417$\\ 
 { SEEDS  }
 &$0.577|0.598$	
 	&$0.943|0.939$	&$0.109|0.118$	&$0.414|0.435$\\ 
 { ERGC  }	
 &$0.593|0.487$	
	&$0.948|0.924$	&$0.104|0.192$	& $0.457|\mathbf{\color{blue}0.586}$\\ 
 { WP  }	
 &$0.588|0.460$	
 &$0.932|0.907$	&$0.124|0.162$	&$0.557|0.508$\\ 
 { ETPS  }		
 &$0.631|0.509$	
 	&$0.943|0.939$	&$0.110|0.199$	& $\mathbf{\color{blue}0.663}|0.386$\\ 
 { LSC  }		
 &$0.607|0.611$	
 	&$0.950|0.929$	&$0.115|0.300$	&$0.420|0.234$\\ 
 { SCALP}			
 &${\underline{\color{red}0.680}}|{\underline{\color{red}0.636}}$
 &${\underline{\color{red}0.954}|\mathbf{\color{blue}0.949}}$&
 ${\underline{\color{red}0.084}}|\underline{\color{red}0.107}$&$0.614|0.509$\\ 
 { SCALP+HC}&
 $\mathbf{\color{blue}0.709}|\mathbf{\color{blue}0.640}$&
 ${\mathbf{\color{blue}0.955}|\underline{\color{red}0.947}}$&
 ${\mathbf{\color{blue}0.076}|\mathbf{\color{blue}0.101}}$&$\underline{\color{red}0.641}|\underline{\color{red}0.545}$\\ 
 \hline 
  \end{tabular}
 } 
 \end{center}
 \label{table:std}
 \end{table}

The gain over state-of-the-art methods is largely increased when computing superpixels on noisy images.
Methods such as \citet{buyssens2014,chen2017,liu2011} 
obtain very degraded performances
when applied to slightly noised images,
while \citet{vandenbergh2012} is the only method that is robust to noise
on all evaluated aspects.
The state-of-the-art methods can indeed have
very different behavior when applied to noisy images.
They generally produce very noisy superpixel boundaries (see Figure \ref{fig:reg}).
This aspect is expressed by the lower performances of CD over BR in the bottom part of Figure \ref{fig:soa_metrics}.
The regularity is also degraded for all methods, except \citet{buyssens2014}, 
that tends to generate more regular superpixels, failing at grouping homogeneous pixels.
Finally, on the ASA metric, SCALP provides slightly higher results than SCALP+HC for these images.
The presence of noise may mislead the contour detection that should not be considered as a hard constraint 
to ensure the respect of object segmentation.
These results are summarized in Table \ref{table:std}, where we report the performances of all 
compared methods on both initial and noisy images for $K=200$ superpixels.

{\color{black}
Despite the large number of features used in SCALP, the computational time remains reasonable, 
 \emph{i.e.}, less than $0.5$s on BSD images, on a single CPU,
without any multi-threading architecture, contrary to implementations of methods such as ETPS \citep{yao2015}.
This computational time corresponds to standard ones of superpixel methods, 
and SCALP is even faster than methods such as \citet{levinshtein2009,liu2011}, whose computational time 
can be up to $5$s.

In this work, we focus on the decomposition performances and do not extensively compare the processing times, 
since this measurement is highly dependent on the implementation and optimization, and does not necessarily reflect the computational potential of each method
 \citep{stutz2016}.
Nevertheless, our method is based on the iterative clustering framework \citep{achanta2012}, 
and recent works have demonstrated that such algorithm could be implemented to perform in real-time \citep{ban2016,choi2016subsampling,neubert2014compact}.
Therefore, since SCALP have the same  complexity as SLIC, our method can
 reach such computational time with optimized implementation or multi-threading architectures. }

Finally, Figures \ref{fig:soa_images} and \ref{fig:soa_images_noisy} respectively illustrate the superpixel decomposition 
results obtained with SCALP and the best compared methods on initial and noisy images. 
SCALP provides more regular superpixels while tightly following the image contours.
SCALP+HC enables to more accurately guide the decomposition by constraining
superpixels to previously segmented regions.
While most of the compared methods produce inaccurate and irregular results
with slightly noised images (see Figure \ref{fig:soa_images_noisy}), 
  SCALP is robust to noise and produces regular superpixels
  that adhere well to the image contours.

\newcommand{\ppp}{120.3pt} 
\newcommand{\wwwh}{0.1325\textwidth}  
\begin{figure*}[t!]
\centering
{\footnotesize
\begin{tabular}{@{\hspace{0mm}}c@{\hspace{1mm}}c@{\hspace{1mm}}c@{\hspace{1mm}}c@{\hspace{1mm}}c@{\hspace{1mm}}c@{\hspace{1mm}}c@{\hspace{0mm}}}
ERS  &
SLIC  & 
ERGC &
ETPS  &
LSC &
SCALP &SCALP+HC\\
\includegraphics[width=\wwwh]{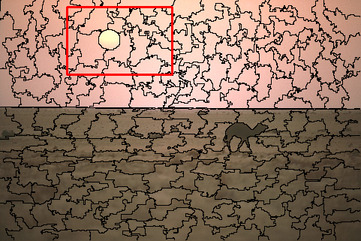}&
\includegraphics[width=\wwwh]{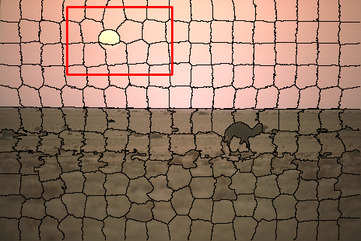}&
\includegraphics[width=\wwwh]{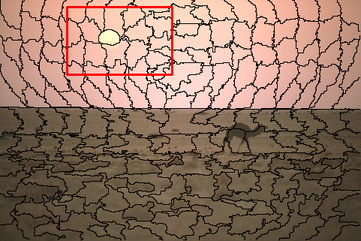}&
\includegraphics[width=\wwwh]{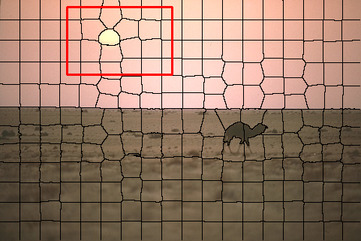}&
\includegraphics[width=\wwwh]{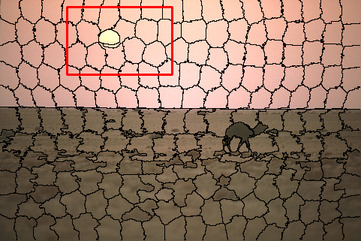}&
\includegraphics[width=\wwwh]{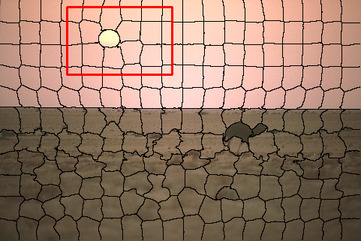}&
\includegraphics[width=\wwwh]{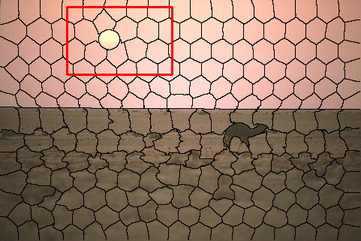}\\ 
\includegraphics[width=\wwwh]{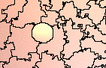}&
\includegraphics[width=\wwwh]{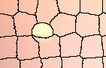}&
\includegraphics[width=\wwwh]{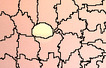}&
\includegraphics[width=\wwwh]{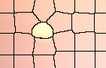}&
\includegraphics[width=\wwwh]{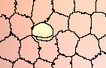}&
\includegraphics[width=\wwwh]{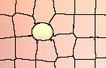}&
\includegraphics[width=\wwwh]{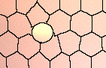}\\
\includegraphics[width=\wwwh]{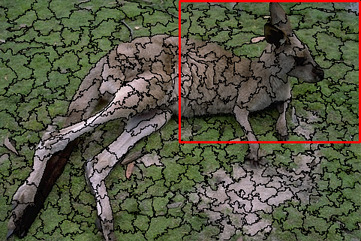}&
\includegraphics[width=\wwwh]{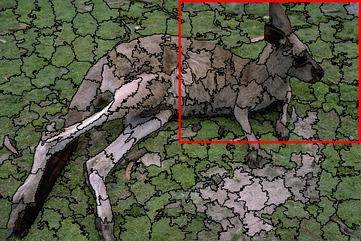}&
\includegraphics[width=\wwwh]{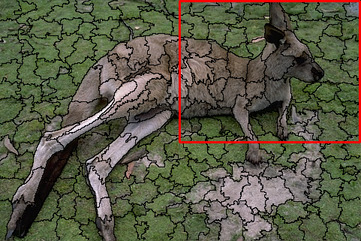}&
\includegraphics[width=\wwwh]{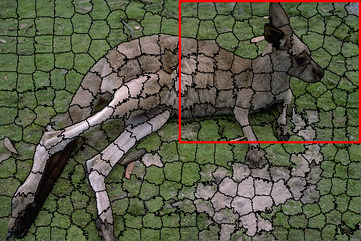}&
\includegraphics[width=\wwwh]{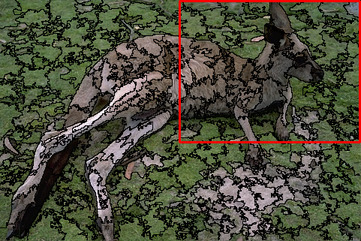}&
\includegraphics[width=\wwwh]{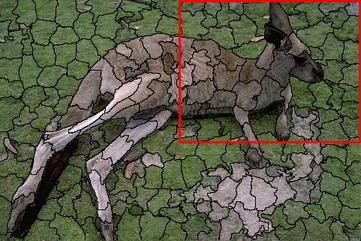}&
\includegraphics[width=\wwwh]{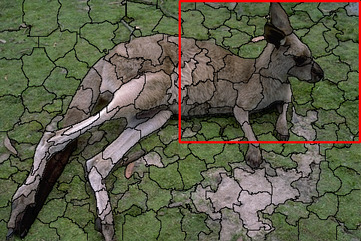}\\ 
\includegraphics[width=\wwwh]{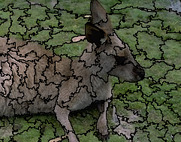}&
\includegraphics[width=\wwwh]{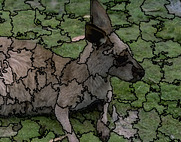}&
\includegraphics[width=\wwwh]{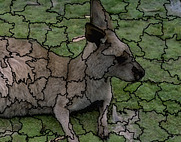}&
\includegraphics[width=\wwwh]{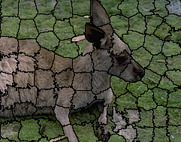}&
\includegraphics[width=\wwwh]{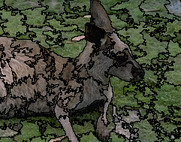}&
\includegraphics[width=\wwwh]{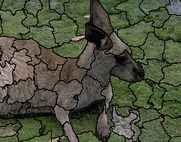}&
\includegraphics[width=\wwwh]{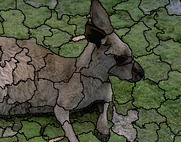}\\
\includegraphics[width=\wwwh]{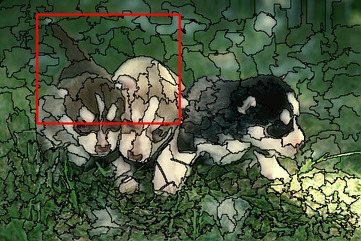}&
\includegraphics[width=\wwwh]{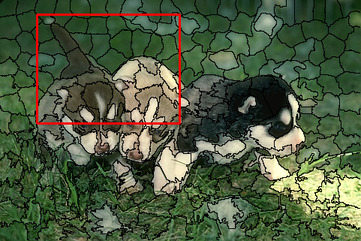}&
\includegraphics[width=\wwwh]{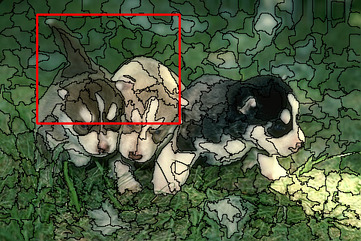}&
\includegraphics[width=\wwwh]{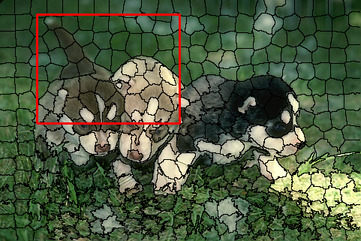}&
\includegraphics[width=\wwwh]{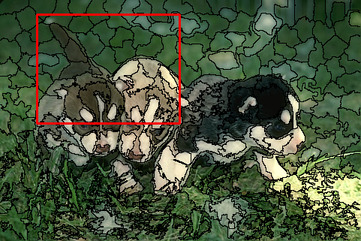}&
\includegraphics[width=\wwwh]{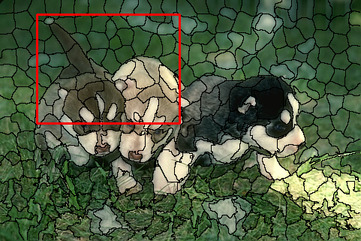}&
\includegraphics[width=\wwwh]{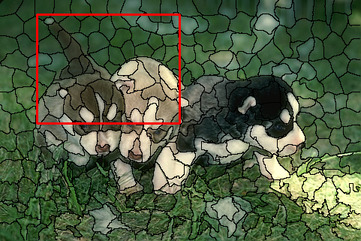}\\
\includegraphics[width=\wwwh]{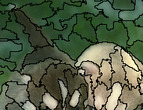}&
\includegraphics[width=\wwwh]{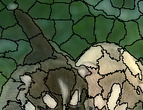}&
\includegraphics[width=\wwwh]{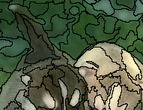}&
\includegraphics[width=\wwwh]{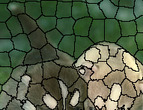}&
\includegraphics[width=\wwwh]{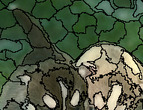}&
\includegraphics[width=\wwwh]{res_112_scalp_zoom.jpg}&
\includegraphics[width=\wwwh]{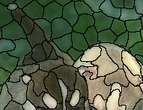}\\
\includegraphics[width=\wwwh]{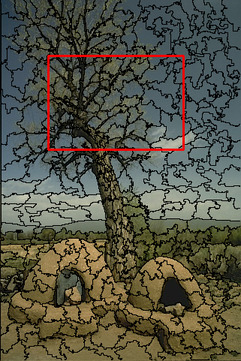}&
\includegraphics[width=\wwwh]{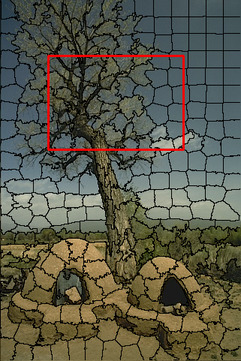}&
\includegraphics[width=\wwwh]{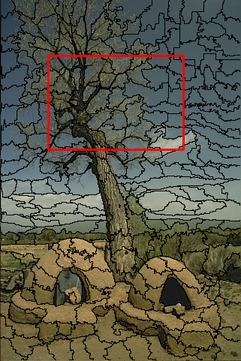}&
\includegraphics[width=\wwwh]{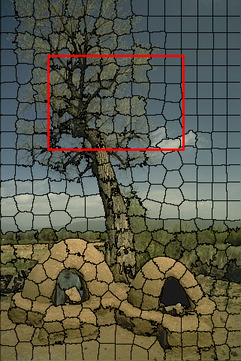}&
\includegraphics[width=\wwwh]{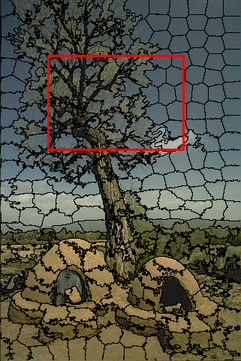}&
\includegraphics[width=\wwwh]{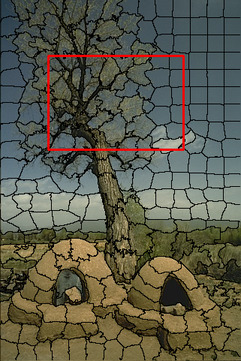}&
\includegraphics[width=\wwwh]{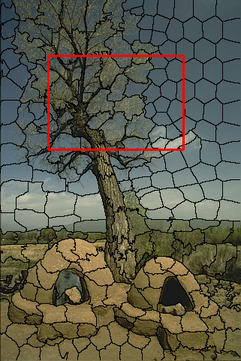}\\ 
\includegraphics[width=\wwwh]{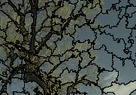}&
\includegraphics[width=\wwwh]{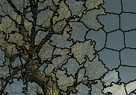}&
\includegraphics[width=\wwwh]{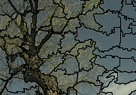}&
\includegraphics[width=\wwwh]{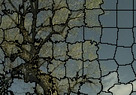}&
\includegraphics[width=\wwwh]{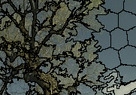}&
\includegraphics[width=\wwwh]{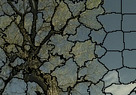}&
\includegraphics[width=\wwwh]{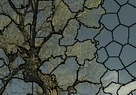}\\
\end{tabular}
} 
\caption{Qualitative comparison of decomposition results between SCALP and state-of-the-art 
superpixel methods on example images of the BSD.
  SCALP provides the most visually satisfying results with superpixels
  that adhere well to the image contours while being equally sized and having compact shapes.
  SCALP+HC enables to further enforce the respect of image contours.
} 
\label{fig:soa_images}
\end{figure*}

\begin{figure*}[t!]
\centering
{\footnotesize
\begin{tabular}{@{\hspace{0mm}}c@{\hspace{1mm}}c@{\hspace{1mm}}c@{\hspace{1mm}}c@{\hspace{1mm}}c@{\hspace{1mm}}c@{\hspace{1mm}}c@{\hspace{0mm}}}
ERS  &
SLIC  & 
ERGC &
ETPS  &
LSC &
SCALP &SCALP+HC\\
\includegraphics[width=\wwwh]{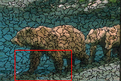}&
\includegraphics[width=\wwwh]{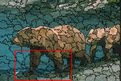}&
\includegraphics[width=\wwwh]{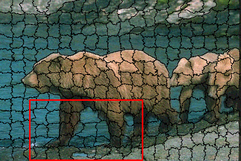}&
\includegraphics[width=\wwwh]{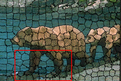}&
\includegraphics[width=\wwwh]{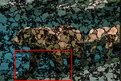}&
\includegraphics[width=\wwwh]{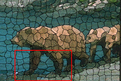}&
\includegraphics[width=\wwwh]{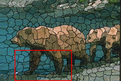}\\ 
\includegraphics[width=\wwwh]{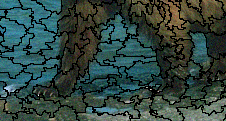}&
\includegraphics[width=\wwwh]{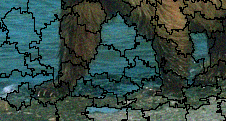}&
\includegraphics[width=\wwwh]{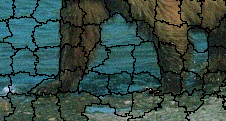}&
\includegraphics[width=\wwwh]{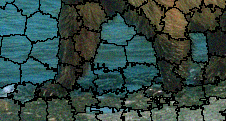}&
\includegraphics[width=\wwwh]{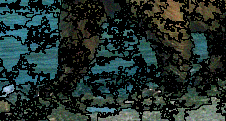}&
\includegraphics[width=\wwwh]{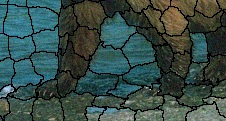}&
\includegraphics[width=\wwwh]{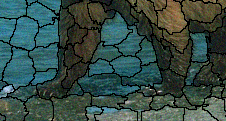}\\
\includegraphics[width=\wwwh]{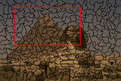}&
\includegraphics[width=\wwwh]{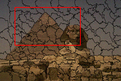}&
\includegraphics[width=\wwwh]{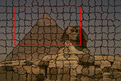}&
\includegraphics[width=\wwwh]{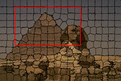}&
\includegraphics[width=\wwwh]{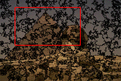}&
\includegraphics[width=\wwwh]{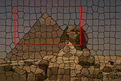}&
\includegraphics[width=\wwwh]{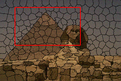}\\ 
\includegraphics[width=\wwwh]{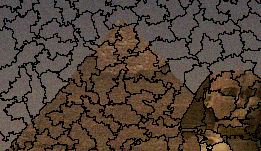}&
\includegraphics[width=\wwwh]{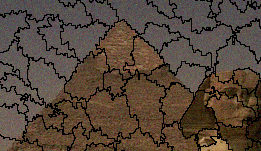}&
\includegraphics[width=\wwwh]{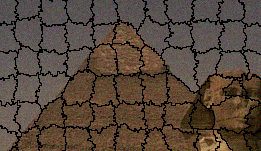}&
\includegraphics[width=\wwwh]{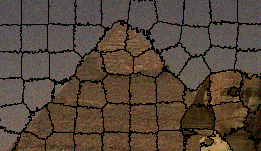}&
\includegraphics[width=\wwwh]{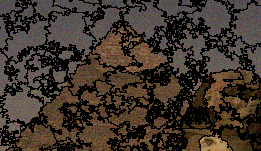}&
\includegraphics[width=\wwwh]{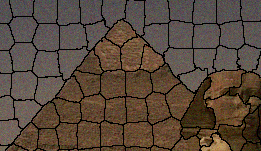}&
\includegraphics[width=\wwwh]{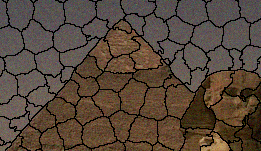}\\
\end{tabular}
} 
\caption{
Qualitative comparison of decomposition results between SCALP and state-of-the-art 
superpixel methods on noisy images of the BSD.
The compared methods generate inaccurate superpixels with noisy borders, while
  SCALP is robust to noise and produces regular superpixels with smooth boundaries
  that adhere well to the image contours. 
  } 
\label{fig:soa_images_noisy}
\end{figure*}

\subsection{Extension to Supervoxels}

Finally, we naturally extend the SCALP method to the computation of supervoxels on 3D volumes, 
for the segmentation of 3D objects or medical images.
Many supervoxel methods are dedicated to video segmentation, see for instance \citet{xu2012evaluation},
and references therein.
These methods segment the volume into temporal superpixel 
tubes and are therefore only adapted to the context of video processing.
Other methods propose to perform superpixel tracking, \emph{e.g.}, \citet{chang2013,reso2013,wang2011},
which can result in similar tubular supervoxel segmentation, 
and may require the computation of optical flow to be efficient \citep{chang2013}.
Contrary to other methods that necessitate substantial adaptations for 3D data, 
we naturally extend SCALP to compute 3D volume decompositions.
We start from a 3D regular grid and perform the decomposition by adding one
dimension to the previous equations presented in Section \ref{sec:scalp}.

To validate our extension to supervoxels, we consider 3D magnetic resonance imaging (MRI) data from 
the Brain Tumor Segmentation (BRATS) dataset \citep{brats2012}.
This dataset contains 80 brain MRI of patients suffering from tumors.
The images are segmented into three labels: background, tumor and edema, surrounding the tumor.
We illustrate examples of SCALP supervoxel segmentation
with the ground truth segmentation  in Figure \ref{fig:brats_ex},
where the tumor and edema are respectively segmented in green and red color.
This dataset is particularly challenging since the resolution of images is very low
and the ground truth segmentation is not necessarily in line with the image gradients.
Finally, note that SCALP obtains an average 3D ASA measure of $0.9848$, and outperforms
state-of-the-art methods with available implementations SLIC \citep{achanta2012} and ERGC \citep{buyssens2014},
that respectively obtain a 3D ASA of  $0.9840$ and $0.9652$.

\newcommand{\wwwhr}{0.14\textwidth}
\newcommand{\wwwhrh}{0.185\textwidth}
\begin{figure*}[t!]
\centering
{\footnotesize
\begin{tabular}{@{\hspace{0mm}}c@{\hspace{1mm}}c@{\hspace{1mm}}c@{\hspace{2.5mm}}c@{\hspace{1mm}}c@{\hspace{1mm}}c@{\hspace{0mm}}}
\includegraphics[width=\wwwhr,height=\wwwhrh]{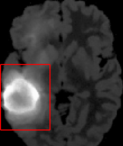}&
\includegraphics[width=\wwwhr,height=\wwwhrh]{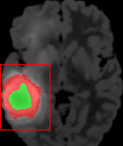}&
\includegraphics[width=\wwwhr,height=\wwwhrh]{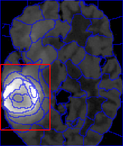}&
\includegraphics[width=\wwwhr,height=\wwwhrh]{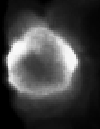}&
\includegraphics[width=\wwwhr,height=\wwwhrh]{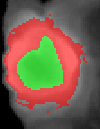}&
\includegraphics[width=\wwwhr,height=\wwwhrh]{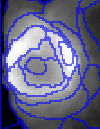}\\ 
\includegraphics[width=\wwwhr,height=\wwwhrh]{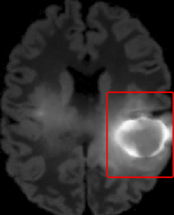}&
\includegraphics[width=\wwwhr,height=\wwwhrh]{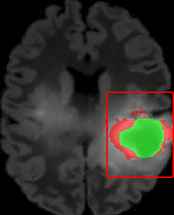}&
\includegraphics[width=\wwwhr,height=\wwwhrh]{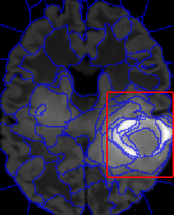}&
\includegraphics[width=\wwwhr,height=\wwwhrh]{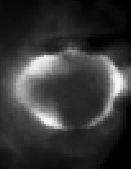}&
\includegraphics[width=\wwwhr,height=\wwwhrh]{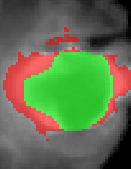}&
\includegraphics[width=\wwwhr,height=\wwwhrh]{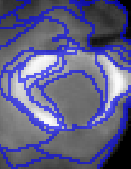}\\ 
Image & Ground truth & Supervoxels&Image & Ground truth & Supervoxels\\
\end{tabular}
} 
\caption{
Examples of supervoxel decomposition with SCALP on medical images \citep{brats2012} (left),
with a zoom (right) on the
tumor and edema, respectively segmented in green and red color.
The structures are detected by SCALP despite the low image resolution.
} 
\label{fig:brats_ex}
\end{figure*}

\section{Conclusion}

 In this work, we generalize the superpixel clustering framework proposed in \citet{achanta2012,giraud2016},
  by considering color features and contour intensity on the linear path from the pixel to the superpixel barycenter.
  Our method is robust to noise and 
  the use of features along such path improves
  the respect of image objects and the shape regularity of the superpixels.
   The considered linear path naturally enforces the superpixel convexity while
   other geodesic distances would provide irregular superpixels.
 Our fast integration of these features within the framework enables 
 to compute the decomposition in a limited computational time.
 SCALP obtains state-of-the-art results, 
  outperforming   the most recent methods of the literature on superpixel and contour detection metrics.
  Image processing and computer vision pipelines would benefit from using 
  such regular, yet accurate decompositions.

\section*{Acknowledgments}
This study has been carried out with financial support from the French 
State, managed by the French National Research Agency (ANR) in the  
frame of the GOTMI project (ANR-16-CE33-0010-01) and the
Investments for the future Program IdEx Bordeaux 
(ANR-10-IDEX-03-02) with the Cluster of excellence CPU.

\bibliographystyle{model2-names}
\bibliography{CVIU}

\end{document}